% main KR 2021

%\typeout{KR2021 Instructions for Authors}

% These are the instructions for authors for KR-21.

\documentclass{article}
\pdfpagewidth=8.5in
\pdfpageheight=11in

\usepackage{kr}

% Use the postscript times font!
\usepackage{times}
\usepackage{soul}
\usepackage{url}
\usepackage[hidelinks]{hyperref}
\usepackage[utf8]{inputenc}
\usepackage[small]{caption}
\usepackage{graphicx}
\usepackage{amsmath}
\usepackage{amsthm}
\usepackage{booktabs}
\usepackage{algorithm}
\usepackage{algorithmic}
\urlstyle{same}

%% ours
\usepackage[dvipsnames]{xcolor}

% the following package is optional:
%\usepackage{latexsym}

% See https://www.overleaf.com/learn/latex/theorems_and_proofs
% for a nice explanation of how to define new theorems, but keep
% in mind that the amsthm package is already included in this
% template and that you must *not* alter the styling.
\newtheorem{example}{Example}
\newtheorem{theorem}{Theorem}

%PDF Info Is REQUIRED.
\pdfinfo{
/TemplateVersion (KR.2021.0)
}

%%%%%% ours 

%% rules
\newcommand{\Fr}[1]{\mathrm{fr}(#1)}
\newcommand{\Const}[1]{\mathrm{const}(#1)}
\newcommand{\Term}[1]{\mathrm{term}(#1)}
\newcommand{\Var}[1]{\mathrm{var}(#1)}
\newcommand{\head}[1]{\mathrm{head(#1)}}
\newcommand{\body}[1]{\mathrm{body(#1)}}

\newcommand{\sep}[2]{\mathrm{sep}_{#2}(#1)}

\newcommand{\ruleset}{\mathcal R}
\newcommand{\instance}{I}

\newcommand{\chase}[3]{\mathrm{chase}_{#1}(#2,#3)}

\newtheorem{definition}{Definition}
\newtheorem{corollary}{Corollary}
\newtheorem{proposition}{Proposition}

%\newcommand{\vars}[1]{\ensuremath{\mathrm{vars}(#1)}}
%\newcommand{\terms}[1]{\ensuremath{\mathrm{terms}(#1)}}

%math
\renewcommand{\phi}{\varphi}

%%% Local Variables:
%%% TeX-master: "kr-main"
%%% End:

 % comments MB
\newcommand{\mlm}[1]{} % comments MLM
 % comments MLM
\newcommand{\mlmm}[1]{\textcolor{black}{#1}} % comments MLM
 % comments MB
 % added text

 \newcommand{\vect}[1]{\mathbf{#1}}

\title{Parallelisable Existential Rules: a Story of Pieces}

% Single author syntax
%\iffalse % (remove the multiple-author syntax below and \iffalse ... \fi here)
%\author{%
%    Author name
%    \affiliations
%    Affiliation
%    \emails
%    email@example.com    % email
%}
%\fi
%% Multiple author syntax
\author{
Maxime~Buron$^1$\and
Marie-Laure~Mugnier$^2$\and
Micha\"{e}l~Thomazo$^3$
\affiliations
$^1$ University of Oxford, United Kingdom \\
$^2$ LIRMM, Inria, University of Montpellier, CNRS, France \\
$^3$ Inria, DI ENS, ENS, CNRS, PSL University \& Inria, France \\
%\emails
%maxime.buron@cs.ox.ac.uk,
%michael.thomazo@inria.fr,\\ mugnier@lirmm.fr
}

\begin{document}

\maketitle

\begin{abstract}
In this paper, we consider existential rules, an expressive formalism well suited to the representation of ontological knowledge and data-to-ontology mappings in the context of ontology-based data integration. 
The chase is a fundamental tool to do reasoning with existential rules as it computes all the facts entailed by the rules from a database instance. We introduce parallelisable sets of existential rules, for which the chase can be computed in a single breadth-first step from any instance. The question we investigate is the characterization of such rule sets. We show that parallelisable rule sets are exactly those rule sets both bounded for the chase and belonging to a novel class of rules, called pieceful. The pieceful class includes in particular frontier-guarded existential rules and (plain) datalog. We also give another characterization of parallelisable rule sets in terms of rule composition based on rewriting. 

\end{abstract}

\section{Introduction}
%%%

% !TEX root = krcr-main.tex

% OBDA: overview
Ontology-based data access (OBDA) systems aim at facilitating data querying through a conceptual layer formalized by an ontology \cite{jods-08-plcglr,DBLP:conf/ijcai/XiaoCKLPRZ18}. They rely on a three-level architecture comprising  the ontology, the data sources and the mapping between the two. The key idea is that a user expresses queries at a conceptual level, and the system translates these queries into queries on the data 
% sources 
via the mapping, while integrating ontological reasoning.  

% OMQA
When we abstract away from data sources and mappings, we obtain the fundamental \emph{ontology-based query answering} problem, which takes as input an ontology $\mathcal{O}$, an  
instance (or set of facts) $\instance$ and a (Boolean) conjunctive query $q$, both expressed in the vocabulary of $\mathcal O$, and asks whether $\instance,\mathcal{O}\models q$. Ontological knowledge is typically represented in \emph{description logics} (\emph{e.g.}, \cite{DBLP:books/daglib/0041477,DBLP:conf/rweb/BienvenuO15}) or \emph{existential rules} (\emph{e.g.}, \cite{Cali2009,blms:09}) and we shall consider the latter language in this paper. 
Existential rules are an extension of first-order function-free Horn rules allowing for
existentially quantified variables in the rule heads (\emph{e.g.}, $\forall x (s(x) \rightarrow \exists z~t(x,z))$), which makes them able to infer the existence of unknown individuals. They generalise datalog and most description logics used to do reasoning on data, namely Horn description logics. 

% chase and query answering
Two dual techniques are used to solve the ontology-based query answering problem: the \emph{chase}, which enriches $\instance$ by performing a fixpoint computation with $\mathcal{O}$ until a canonical model of $\instance$ and $\mathcal{O}$ is obtained (then $q$ is asked on the result of the chase), and \emph{query rewriting}, where $q$ is rewritten with $\mathcal{O}$ into a query $q'$, such that for all instance $\instance$, holds $\instance, \mathcal{O} \models q$ if and only if holds $\instance \models q'$. 
%The problem being undecidable for general existential rules, none of these techniques is guaranteed to terminate, however a wide range of specific rule classes 
% and defining cases in which either of them is applicable has seen a flurry of publications.
%however a wide range of specific rule classes are known to ensure the termination of one of these two techniques.  
Query answering is undecidable with general existential rules, however there are expressive subclasses ensuring the termination of either technique.

% OBDA and query rewriting
In the OBDA paradigm, the instance $\instance$ is not materialized, but virtually defined by the mapping and the data.  
Precisely, an OBDA specification is given by an ontology $\mathcal O$, a relational schema $\mathcal S$ and a mapping $\mathcal M$  from  $\mathcal S$ to $\mathcal O$  \cite{DBLP:journals/aim/Lenzerini18}.
The mapping is itself composed of assertions of the form
$q_S(\vect{x}) \rightarrow q_O(\vect{x})$, where $q_S$ is a query on schema $\mathcal S$ and $q_O$ is a conjunctive query on the vocabulary of $\mathcal O$, 
both with tuple of answer variables $\vect{x}$. 
When $q_S$ is also a conjunctive query, or a relational view, a mapping assertion can be seen as an existential rule. 
Then, the virtual instance $\instance_{(D,\mathcal M)}$, associated with a database $D$ (on $\mathcal S$) and the mapping $\mathcal M$, is the set of facts that would be obtained by chasing $D$ with $\mathcal M$. Note that only a single (breadth-first) step of the chase is required here, as bodies and heads of mapping assertions are on disjoint sets of predicates. 
%Since $\instance_{D,\mathcal M}$ remains virtual, 
Since $\instance_{(D,\mathcal M)}$ is virtual, an incoming query has to be rewritten, first with $\mathcal O$, then with $\mathcal M$, which yields a query directly asked on $D$. 
As rewriting is performed at query time, speeding up this process is a crucial issue. In particular, query rewriting with $\mathcal O$ is a recursive process, which is not the case with $\mathcal M$. 
%While rewriting with $\mathcal M$ can be performed in a single-step, . 
For very lightweight ontology languages (the DL-Lite family or the W3C language RDFS), a practically efficient approach consists of compiling (part of) the ontological reasoning into the mapping, so that the rewriting step with $\mathcal O$ can be avoided or drastically reduced \cite{DBLP:conf/semweb/KontchakovRRXZ14,DBLP:conf/edbt/BuronGMM20}. In these settings, each mapping assertion can be processed independently, the mapping head being enriched with  knowledge it entails. 
%Compilation in these settings is simply performed by enriching each head of a mapping assertion with (part of) the knowledge it entails using $\mathcal O$. 
Whether such technique can be extended to more expressive languages is an open issue, which motivated the work presented here.
%A practically efficient 
%family of techniques 
%approach 
%consists of compiling (part of) the ontological reasoning into the mapping, so that the rewriting step with $\mathcal O$ can be avoided (or drastically reduced). To the best of our knowledge, such approach has so far only been applied to very lightweight ontology languages, such as members of the DL-Lite family \cite{DBLP:conf/semweb/KontchakovRRXZ14} or the W3C language RDFS \cite{DBLP:conf/edbt/BuronGMM20}. In these settings, each mapping assertion can be processed independently, the mapping head being enriched with  knowledge it entails. 
%Compilation in these settings is simply performed by enriching each head of a mapping assertion with (part of) the knowledge it entails using $\mathcal O$. 
%Whether such technique can be extended to more expressive languages is an open issue, which motivated the work presented here.

%Our aim is more generally to develop an OBDA setting where existential rules are used as a uniform language to express both the ontology and the mapping. 
%Using such a uniform setting facilitates the analysis of the interactions between $\mathcal O$ and $\mathcal M$, which is key to efficiency [REF: Calvanese?]. 

%Our aim is more generally to develop an OBDA setting where existential rules are used as a uniform language to express both the ontology and the mapping. 
Consider an OBDA setting where existential rules are used as a uniform language to express both the ontology and the mapping.
Compiling ontological reasoning into the mapping can be seen as computing a new mapping $\mathcal M'$ such that query rewriting with $\mathcal O$ and $\mathcal M$ is reduced to query rewriting with $\mathcal M'$. From a dual viewpoint, for any database $D$, the instance $\instance_{(D,\mathcal M')}$ is equivalent to the chase of $\instance_{(D,\mathcal M)}$ with $\mathcal O$.
%, as illustrated below. 
The next example illustrates the approach. 

\begin{example}
Let $\mathcal M = \{M_1,M_2\}$ and $\mathcal O = \{R_1, R_2\}$, where $s_i$ and $t_i$ denote predicates from $\mathcal S$ and $\mathcal O$, respectively, and universal quantifiers are omitted: \\
$M_1 = s_1(x,y) \rightarrow t_1(x,y)$\\
$M_2 = s_2(x)  \rightarrow t_2(x)$\\
$R_1 = t_2(x)  \rightarrow \exists z~t_3(x,z)$\\
$R_2 = t_1(x,y) \wedge t_3(x,z)  \rightarrow t_4(y)$
\\
Here, the ontology can be compiled into the mapping, which yields $\mathcal M' = \mathcal M \cup \{M_3,M_4\}$, where:
\\
$M_3 = s_2(x) \rightarrow \exists z ~t_3(x, z)$\\
$M_4 = s_1(x,y) \wedge s_2(x) \rightarrow t_4(y)$. \\
 Intuitively, $\mathcal M'$ is obtained by composing the rules from $\mathcal M \cup \mathcal O$ until a fixpoint is reached (see Sect. \ref{sec-composition}), then keeping only mapping assertions, \emph{i.e.,} rules whose all body predicates are in  $\mathcal S$. 
 We can check that, for any database $D$ on $\mathcal S$, a single breadth-first step of the chase of $D$ with $\mathcal M'$ suffices to produce the chase of $D$ with $\mathcal M \cup \mathcal O$. 
\end{example}
 
With the aim of developing compilation techniques for OBDA systems based on existential rules, we asked ourselves the following question: under which conditions on the rules can the chase be simulated in a single step?

%In this paper, we make a first-step towards the development of such techniques for OBDA systems based on existential rules, by investigating the following question: under which conditions on the rules can the chase be simulated in a single step? 
%
%%which kinds of rules can we simulate its chase in a single (breadth-first) chase step? 
%
%%AS A FIRST STEP TOWARD THE PRACTICAL DEVELOPMENT of OBDA systems based on existential rules, 

We formalize the desired property by the notion of \emph{parallelisable} existential rule sets. Informally, a (finite) rule set $\ruleset$ is parallelisable if there exists a finite rule set $\ruleset'$ able to produce (an equivalent superset of) 
the chase of $\ruleset$ in a \emph{single} breadth-first step, independently from any instance. 
The question we  investigate in this paper is how to characterize such rule sets.   Our main results are the following. 
\begin{itemize}
\item Clearly, \emph{boundedness}, which expresses that the number of chase steps is bounded independently from any instance, is a necessary condition for parallelisability. This notion has long been studied for datalog \cite{DBLP:journals/jlp/HillebrandKMV95} and more recently for existential rules \cite{DBLP:conf/ijcai/BourhisLMTUG19,DBLP:journals/tplp/DelivoriasLMU21}. While boundedness is equivalent to parallelisability in the case of datalog, it does not ensure the parallelisability of an existential rule set. This leads us to define a new class of existential rules, namely \emph{pieceful}. 
We show that parallelisable rule sets are exactly those sets that are both bounded and pieceful.
\item The novel pieceful class has an interest in itself, as it generalizes datalog as well as a main class of existential rules, namely frontier-guarded, which itself covers some prominent description logics (see the last section for details). 
 \item Piecefulness is based on the behavior of rules during the chase. Adopting the viewpoint of query rewriting, we study an operator of rule composition (that we call \emph{existential composition}) based on so-called piece-unifiers, a notion at the core of rewriting with existential rules. We show that any bounded and pieceful rule set can be parallelized by a finite set of composed rules.  
 \item This allows us to provide another characterization of pieceful rule sets based on their behavior during rule composition. Roughly,  a certain property (that we call \emph{existential stability}) has to be fulfilled by every pair of rules of the set and preserved whenever composed rules are added to the set. %This yields another charaterization of parallelisable rulesets.
 \item Finally, noting that rule composition does not fit well with the behavior we intuitively expected, we introduce another rule composition operator (that we call \emph{compact composition}). The result of this operator goes beyond the existential rule language. We show however that, when the stability property is satisfied, both kinds of compositions yield logically equivalent formulas. 
\end{itemize}

We believe that these results open up many practical and theoretical perspectives, which we discuss in the last section. 
\mlm{Add here: Full proofs that could not be included due to space restrictions are available in a technical report [REF]?}

%%% Local Variables:
%%% TeX-master: "krcr-main"
%%% End:

%
%\onecolumn
%\newpage
%\twocolumn
%
\section{Preliminaries}\label{sec-prelim}
%%%%
%
%
%!TEX root = krcr-main.tex
%%  to be added in macros.tex
\newcommand{\exist}[1]{\mathrm{exist}(#1)}

\paragraph{Generalities.}
A vocabulary is a pair $\mathcal V = (\mathcal P, \mathcal C)$, where $\mathcal P$ is a finite set of predicates and  $\mathcal C$ is a possibly infinite set of constants.  A \emph{term} on $\mathcal V$ is a constant from $\mathcal C$ or a variable. An
\emph{atom} on $\mathcal V$ has the form $p(\vect{t})$ where $p \in \mathcal P $ is a predicate of arity $n$ and $\vect{t}$ is a tuple of terms on $\mathcal V$ with $|\vect{t}|=n$. 
An atom is \emph{ground} if it has no variable.  
Given an atom or set of atoms $S$, we denote by $\Var{S}$, $\Const{S}$ and $\Term{S}$ its sets of variables, constants and terms, respectively. 
We will often see a tuple $\vect{x}$ of pairwise distinct variables as a set.  
We denote by $\models$ the classical logical consequence. % and by $\equiv$ the logical equivalence. \mlm{check: used more than once?}
Given two sets of atoms $S_1$ and $S_2$,
a \emph{homomorphism} $h$ from $S_1$ to $S_2$ is a substitution of $\Var{S_1}$
by $\Term{S_2}$ such that $h(S_1) \subseteq S_2$ (we say that $S_1$ \emph{maps} to 
 $S_2$ by $h$).

\vspace*{-0.4cm}
\paragraph{Instances and Rules.} An \emph{instance} is a finite set of ground atoms. Any finite set of atoms $S$ can be turned into an instance,
denoted by $\mathrm{freeze}(S)$, by \emph{freezing} its variables, \emph{i.e.}, bijectively renaming each variable by a fresh constant. 
An \emph{extended instance} is a (possibly infinite) set of atoms, in which variables are classically called \emph{nulls}. The associated (possibly infinite) formula
is the existential closure of the conjunction of the atoms. 
% existential rules
 An \emph{existential rule} $R$ (or simply \emph{rule} hereafter) is a closed formula of  the form  
  $$\forall\vect{x}\forall\vect{y} ~[~B(\vect{x},\vect{y}) \rightarrow \exists \vect{z} ~H(\vect{x},\vect{z})~]$$
 where $B$ and $H$ are non-empty and finite conjunctions of atoms on variables, respectively called the \emph{body} and  the \emph{head} of the rule, denoted by $\body{R}$ and $\head{R}$, and $\vect{x}, \vect{y}$ and $\vect{z}$ are pairwise disjoint. We make the common assumption that rules do not contain constants, which 
 %greatly 
 simplifies technical tools. 
 The set $\vect x$ is called the \emph{frontier} of $R$ and is denoted by $\Fr{R}$. Its elements are called frontier variables. 
 The set $\vect{z}$ is called the set of \emph{existential variables} (of $R$) and is denoted by $\exist{R}$.
An existential rule $R$ is \emph{datalog} if $|\head{R}| = 1$ and $\exist{R} = \emptyset$.
  For brevity, we often denote by $B \rightarrow H$ a rule with body $B$ and head $H$. In the following, we denote by $\mathcal R$ a finite set of existential rules and assume w.l.o.g. that distinct rules in $\mathcal R$ have disjoint sets of variables. In our examples, we reuse variables for simplicity. 
  % the sake of simplicity. % and we omit universal quantifiers in rules. 
%   and replace conjunctions with commas. E.g., $b(x) \rightarrow \exists z~p(x,z), q(z)$ means $\forall x ~b(x) \rightarrow \exists z~p(x,z) \land q(z)$.
%   \mlm{Finalement, on pourrait mettre des $\land$ partout, on n'a pas tellement d'exemples !}
%   \mt{Do we really use $B \rightarrow H$?}
%   
 %
%   \later{ 
% A \emph{body homomorphism} from a rule body $B_1$ to a rule body $B_2$ is a homomorphism from $B_1$ to $B_2$ that maps the frontier of $B_1$ to the frontier of $B_2$.
% The same notion (called \emph{head homomorphism} for better clarity) can be applied to a pair of rule heads.  \\
% Je pense que ceci ne sert (servait ?) que pour les redondances de règles - cf. ex strong redundancy - et finalement qu'à un seul endroit }
%  
% A \emph{knowledge base} (KB) is a pair $(I, \mathcal R)$ with $I$ an instance and $\mathcal R$ a finite set of existential rules. 
% \mlm{check: do we use KB?} 

%  \mt{A minor problem: the composition operations create rules that do not have disjoint sets of variables with respect to the input. And this is actually nice at places to write down proofs.}

A rule $R = B \rightarrow H$  is \emph{applicable} to a set of atoms $S$ if there is a homomorphism $\pi$ from $B$ to $S$. The pair $(R,\pi)$ is called a \emph{trigger} for $S$. The application of $R$ according to $\pi$ (or: of the trigger $(R,\pi)$) produces a set of atoms obtained from $\head{R}$ by replacing each frontier variable $x$ with $\pi(x)$ and each existential variable with a fresh variable, called a \emph{null}. We denote by $\pi^{\mathrm{safe}}$ this extension of $\pi$ that ``safely'' renames existential variables; hence, atoms produced by distinct applications of the same rule have disjoint sets of nulls. 
% \mlm{En fait, le nommage on ne l'utilisera pas. C'est juste pour avoir exactement le semi-oblivious chase - cf. def chase. Ca évite de parler de la R-derivation dans le chase puisque le chase ne considère que les ens d'atoms obtenus}
The set of atoms resulting from the application of $(R,\pi)$ to $S$ is $\alpha(S,R,\pi) = S \cup \pi^{\mathrm{safe}}(H)$.
%\\\mlm{check: do we apply rules only on (extended) instances?} \mt{Not in the proof of Proposition \ref{prop-pieceful-local-stability}.}\mt{Comment added in that proof.}

An \emph{$\ruleset$-derivation} (from $\instance$ to $\instance_k$) is a finite sequence $(\instance_0=\instance),(R_1, \pi_1, \instance_1), \ldots, (R_{k}, \pi_{k}, \instance_k)$
such that  for all $0 < i \leq k$, $R_i \in \ruleset$, $(R_i, \pi_i)$ is a trigger for $\instance_{i-1}$  and $\instance_{i} = \alpha(\instance_{i-1},R_i, \pi_i)$. When only the successive (extended) instances are needed, we note $(\instance_0=\instance),\instance_1, \ldots, \instance_k$.
%\mt{Should be extended instances here, no? Or do instance mean extended instance by default?.}
%\mlm{ Change to $(R_0, \pi_0, \instance_1)$ ?} 

\vspace*{-0.4cm}
\paragraph{Chase.} The chase 
%(a term coming from database theory) 
builds a derivation from an instance by repeatedly applying rules until a fixpoint is reached. 
We rely on the \emph{semi-oblivious} chase variant (in short, \emph{so-chase}), in which two triggers $(R,\pi_1)$ and $(R,\pi_2)$ that coincide on the frontier of $R$ produce exactly the same result \cite{DBLP:conf/pods/Marnette09}. More precisely, we assume that nulls are named as follows: given a trigger $(R, \pi)$, for all $z \in \exist{R}$, $\pi^{\mathrm{safe}}(z) = z_{(R, \pi_{|\Fr{R}})}$, where $\pi_{|\Fr{R}}$ denotes the restriction of  $\pi$ to the domain $\Fr{R}$. Hence, the name of a null created by a trigger $(R,\pi)$ is based on $R$ and $\pi_{|\Fr{R}}$, not $\pi$ itself. 
 We consider a breadth-first so-chase defined as follows: $\chase{0}{\instance}{\ruleset} = \instance$ and for any $i > 0$, 
 $$\chase{i}{\instance}{\ruleset} = \chase{i-1}{\instance}{\ruleset} \cup \bigcup_{(R,\pi)}( \pi^{\mathrm{safe}}(\head{R}))$$
where $(R,\pi)$ is any trigger for $\chase{i-1}{\instance}{\ruleset}$ and   for any $z \in \exist{R}$, $\pi^{\mathrm{safe}}(z) = z_{(R, \pi_{|\Fr{R}})}$.
Finally, $\chase{\infty}{\instance}{ \ruleset} = \bigcup_{i \geq 0} \chase{i}{\instance}{ \ruleset}$.  
%\vspace*{-0.4cm}
\begin{example} Let $\instance = \{ p(a,b) \}$ and $\ruleset = \{p(x,y) \rightarrow \exists z ~p(x,z) \land A(z)\}$. By the trigger $t_1 = (R, \{x \mapsto a, y \mapsto b\})$, we obtain
$\chase{1}{\instance}{\ruleset} = \{ p(a,b), p(a, \nu), A(\nu) \}$ with $\nu = z_{(R, \{ x \mapsto a \})}$. The trigger $t_2 = (R, \{x \mapsto a, y \mapsto \nu\})$ then produces the same atoms as  $t_1$. Finally, $\chase{\infty}{\instance}{ \ruleset} = \chase{1}{\instance}{\ruleset}$. 
\end{example}

\vspace*{-0.4cm}
\paragraph{Query Answering.} A \emph{conjunctive query} (CQ) is of the form $q(\mathbf{x}) = \exists \mathbf{y}~\phi(\mathbf{x},\mathbf{y})$, where  $\mathbf{x}$ and $\mathbf{y}$ are disjoint tuples of variables, $\phi$ is a conjunction of atoms, and $\mathbf{x} \cup \mathbf{y} = \textit{var}(\phi)$; the free variables in $\phi$ (\emph{i.e.}, $\mathbf x$) are called answer variables. A \emph{Boolean CQ} has no free variables. A \emph{union of conjunctive queries} (UCQ) is a disjunction of CQs that have the same arity  $|\mathbf{x}|$.  An (extended) instance $\instance$ answers positively to a Boolean CQ $q$ iff $\instance \models q$. More generally, a tuple of constants $\vect{c}$  is an \emph{answer} to a CQ $q(\vect{x})$,
 with $|\vect{x}| = |\vect{c}|$, on $\instance$ if  $\instance \models q[\vect{c}]$, where  $q[\vect{c}]$ is obtained by substituting the i-th variable in $\vect{x}$ by the i-th constant in $\vect{c}$.  
%Wlog we restrict our focus to Boolean CQs in the following. \mlm{yes?} \mt{We actually use non Boolean queries in the sketch of Propostion 5}
The \emph{CQ answering} problem takes as input an instance $\instance$, a rule set $\ruleset$, a query $q(\vect{x})$ and a tuple of constants $(\vect{c})$, with $|\vect{x}| = |\vect{c}|$,   and asks whether  $\instance, \ruleset \models q[\vect{c}]$. 
It holds that $\instance, \ruleset \models q[\vect{c}]$ iff there is an $\ruleset$-derivation from $\instance$ to $\instance'$ such that  $\instance' \models q[\vect{c}]$. Equivalently, $\instance, \ruleset \models q[\vect{c}]$ iff there is $k$ such that chase$_k(\instance, \ruleset) \models q[\vect{c}]$.

% \todo{Check what we need about queries}\mt{Current notions needed in sketch proposition 5}

%(Boolean) CQ - with a tuple of terms (possibly constants)
% \later{It may be useful, in particular to make links between rule composition and query rewriting, to see a rule as a pair of conjunctive queries where frontier variables are seen as answer variables: 
% $\forall\vect{x} (Q_B(\vect{x}) \rightarrow Q_H(\vect{x})$).}

\vspace*{-0.4cm}
\paragraph{Pieces.} The notion of piece  is key in this paper. 
%, that we borrow from earlier work on graph-based rules equivalent to existential rules (see e.g., [cite], also ICCS 2009 for a synthesis),
%\later{rather put this in the introduction?}
Given a set of atoms $S$, a \emph{piece} of $S$ with respect to a set of terms $T$  is a non-empty set $S' \subseteq S$ such that (1) for any atom $a \in S$, if $a$ shares a term from $T$  with some $a' \in S'$ then $a \in S'$, and (2) there is no strict subset of $S'$ that satisfies  (1), \emph{i.e.}, $S'$ is minimal. Intuitively, atoms of $S$ are glued together by the terms of $T$, which yields pieces.   Note that, for any set of atoms $S$ and set of terms $T$, $S$ can be partitioned into pieces w.r.t. $T$. Here, pieces are defined with respect to existential variables or to nulls, depending on the considered objects. By default, a piece of $S$ is 
w.r.t. $\exist{S}$ if $S$ is a rule head and w.r.t. the nulls in $S$ if $S$ is an (extended) instance. 

A rule is called \emph{single-piece} if its head forms a single piece. 
%\mlm{Check use: single-piece or single-piece-head? As we could define pieces in the bodies too.} \mt{Singe-piece seems ok.} 
Any rule can be decomposed  into a (trivially) equivalent set of single-piece rules. For instance, a rule $r(x,y) \rightarrow \exists z_1 \exists z_2~p(x,z_1) \land A(z_1) \land A(z_2) \land p(x,y)$ has a head with three pieces: $\{p(x,z_1), A(z_1)\}$, $\{A(z_2)\}$ and $\{p(x,y)\}$, hence can be decomposed into three single-piece rules: $r(x,y) \rightarrow \exists z_1~p(x,z_1) \land A(z_1)$; $r(x,y) \rightarrow \exists z_2~A(z_2)$ and $r(x,y) \rightarrow p(x,y)$.
In the following, we assume that rules are single-piece. This assumption simplifies our setting, although all notions and results of this paper could be reformulated without making  it. %\mb{peut-être ajouter un pointeur vers une discussion sur ce sujet} %Je ne vois pas lequel, en l'état. 
%According to this assumption, the head of a datalog rule is made of single atom. 

\vspace*{-0.3cm}
\paragraph{Piece-Unifiers.} Piece-unifiers are a generalization of classical unifiers that take care of existential variables in rule heads by unifying sets of atoms instead of single atoms
 \cite{blms:09}.  Query rewriting as well as rule composition (see Sect. \ref{sec-composition}) are based on this notion.  In the definition below, we give a simplified version of piece-unifiers, which does not take constants into account. %\mlm{see if we use them only to unify parts of rules}
See, \emph{e.g.}, \cite{klmt:15} for details about piece-unifiers. 

Given sets of atoms $S$ and $S' \subseteq S$, the set of \emph{separating variables} in $S'$ w.r.t. $S$, denoted by $\sep{S'}{S}$, is the set of variables that belong to both $S'$ and $S\setminus S'$. 
%Note that if $S'$ is a piece of a rule head $S$, the separating variables in $S'$ are necessarily frontier variables. \mlm{Check if it has a meaning for rules with single-piece-head} \mt{I do not understand the question.} If 
Note that when $S'$ is a piece of an extended instance $\instance$, $\sep{S'}{\instance}$ is necessarily empty (as $S'$ shares only constants with the rest of $\instance$).  

\begin{definition}[Piece-unifier]
  Let $S$ be a set of atoms and $R = B \rightarrow H$ be a rule (both without constants) \mlmm{such that $\Var{S} \cap \Var{B \cup H} = \emptyset$}. A \emph{piece-unifier} of $S$ with $R$ (or with $H$)   
  is a triple $\mu = (S',H', u)$ with $S' \neq \emptyset$, $S' \subseteq S$,  $H' \subseteq H$, and 
  $u$ is a substitution of $\Fr R \cup \Var{S'}$ by $\Var{\head{R}}$ such that:
  \begin{enumerate}
  \item For all $x \in \Fr R$, $u(x) \in \Fr R$
  \item For all $x \in \sep{S'}{S}$, $u(x) \in \Fr R$ %\emph{[called piece condition]}
  \item $u(S') = u(H')$. 
  \end{enumerate}
\end{definition}
%\mt{This definition is actually not enough for the sketch of Proposition 5, as we use constants in the queries...}		
%
%\later{Hereafter, $S$ will denote a query or a rule body}

% \mlm{Next: general definition of a piece-unifier:\\
%   Let $S$ be a set of atoms  and $R$ be an existential rule. A \emph{piece-unifier} of $S$ with $H$ (or with $R$) is a triple $\mu = (S',H', u)$ with $S' \neq \emptyset$, $S' \subseteq S$,  $H' \subseteq H$, and 
%   $u$ is a substitution of $\Fr R \cup \Var{S'}$ by $\Term{\head{R}} \cup \Const{S'}$ such that:
%   \begin{itemize}
%   \item For all $x \in \Fr R$, $u(x) \in \Fr R$ or $u(x)$ is a constant
%   \item For all $x \in \sep{S'}{S}$ or $S$ is an answer variable, $u(x) \in \Fr R$ or $u(x)$ is a constant [called piece condition]
%   \item $u(S') = u(H')$.
%   \end{itemize}
%   }
% 
% \later{Notations for $S'$ and $H'$ (check if used - I don't think so):}\mt{Not yet, but could be useful. to check} 
% In the following, the restriction of $S$ and $H$ to atoms involved in $\mu$, i.e., $S'$ and $H'$, are denoted by $S_{|\mu} = S'$ and $H_{|\mu} = H'$.

\mlmm{Let $S$ be a set of atoms and $\mu = (S',H', u)$ be a piece-unifier of $S$ with $R: B \rightarrow H$.  
 The \emph{(direct) rewriting} of $S$ w.r.t. $\mu$ is $\beta(S,R,\mu) = u(B) \cup u(S \setminus S')$. An \emph{$\ruleset$-rewriting} $S_k$ of $S$ is obtained by a finite sequence $(S_0=S), \ldots, S_k$ such that, for all $0 < i \leq k$, $S_i$ is a direct rewriting of $S_{i-1}$ w.r.t. a piece-unifier of $S_{i-1}$ with a (copy of a) rule from $\ruleset$. }

% The \emph{(direct) $\ruleset$-rewriting} of $S$ w.r.t. $\mu$ is $\beta(S,R,\mu) = u^{\mathrm{safe}}(B) \cup u(S \setminus S')$, where $u^{\mathrm{safe}}$ is a safe extension of $u$ substituting variables in $\Var{B} \setminus \Var{H'}$ %\mlm{was $B'$, check; actually $u$ should consider only the frontier variables from $H'$, so that $u^{\mathrm{safe}}$ can rename the frontier variables that do not belong to $H'$} 
% with fresh variables. An \emph{$\ruleset$-rewriting} $S_k$ of $S$ is obtained by a finite sequence $(S_0=S) \ldots S_k$ such that, for all $0 < i \leq k$, $S_i$ is a direct  $\ruleset$-rewriting of $S_{i-1}$. 

% \mlm{Check which notations we need about rewritings}\mt{Again, these notations are used for the sketch of Proposition 5}

\begin{example}[Piece-Unifier] Let $R = A(x) \rightarrow \exists z~p(x,z)$ and $S_1 = \{p(w,v), B(v)\}$. There is no piece-unifier of $S_1$ with $R$ since $v$ is a separating variable of $S_1' = \{p(w,v)\}$, hence cannot be unified with $z$. Let $S_2 = \{ p(w_1, v), B(w_1), p(w_2,v), C(w_2) \}$. 
The triple $\mu=(S_2',H',u)$ with $S_2'= \{p(w_1,v),p(w_2,v)\}$, $H'=\{p(x,z)\}$ and $u = \{w_1, w_2 \mapsto x, ~v \mapsto z\}$  is a piece-unifier of $S_2$ with $R$,  which yields the direct rewriting $\{A(x),B(x),C(x)\}$. 
\end{example}

A fundamental property of piece-unifiers is the following: given any instance $\instance$, rule set $\ruleset$ and Boolean CQ $q$,  there is an $\ruleset$-derivation from $\instance$ to an $\instance_i$ such that $q$ maps to $\instance_i$ ($\instance_i \models q$) iff there is an $\ruleset$-rewriting $q'$ of $q$ such that $q'$ maps to $\instance$ ($\instance \models q'$). Note that this property does not rely on rules being single-piece. 

\vspace*{-0.4cm}
\paragraph{Fundamental Properties of Rule Sets.}
A rule set $\ruleset$ is \emph{so-chase finite} if for any instance $I$, there is $k$ such that $\chase{k}{\instance}{ \ruleset} = \chase{\infty}{\instance}{\ruleset}$.
A rule set $\ruleset$ is a \emph{finite unification set} (fus) if for any Boolean CQ $q$, there is a finite set $Q$ of $\ruleset$-rewritings of $q$, such that, 
for any $\ruleset$-rewriting $q'$ of $q$, there is $q'' \in Q$ that maps to $q'$ \cite{blms:09}. It is not hard to see that fus is equivalent to first-order rewritability (introduced in \cite{dl-lite}):  
$\ruleset$ is first-order rewritable if for any Boolean CQ $q$, there is a UCQ $Q$, such that for any instance $\instance$, it holds that 
$\instance, \ruleset \models q$ iff $\instance \models Q$. 
Moreover, as several times remarked, fus is equivalent to the \emph{bounded derivation depth property} \cite{Cali2009}: for any Boolean CQ $q$, there is $k$ such that for any instance $\instance$, if $\instance, \ruleset \models q$ then $\chase{k}{\instance}{ \ruleset} \models q$. 
Finally, a rule set is \emph{so-bounded} if there is $k$ (a bound) such that for any instance $\instance$, $\chase{k}{\instance}{ \ruleset} = \chase{\infty}{\instance}{\ruleset}$
\cite{DBLP:journals/tplp/DelivoriasLMU21}. 
Clearly, when a rule set is so-bounded, it is also so-chase finite and fus.
The reciprocal holds true for single-piece rules (follows from \cite{DBLP:conf/ijcai/BourhisLMTUG19}).
%It is known that a rule set is so-bounded iff it is both so-chase finite and fus ($\Rightarrow$: follows from the above statements; $\Leftarrow$: from  \cite{DBLP:conf/ijcai/BourhisLMTUG19}). 
% Note that any datalog rule set is so-chase finite (as no null is ever created), hence it is (so-)bounded iff it is fus. 
In the remaining, we will simply write chase and bounded in place of so-chase and so-bounded.

\begin{example} $~$\\
$\ruleset_1 = \{A(x) \rightarrow \exists z~p(x,z) \land A(z)\}$ is fus and not chase-finite. 
\\
$\ruleset_2 = \{R_1 : p(x,y) \wedge p(y,z) \rightarrow p(x,z)\}$ is datalog (hence chase-finite) and not fus (hence not bounded). 
\\
$\ruleset_3 = \ruleset_2 \cup \{ R_2 : p(x,y) \wedge p(u,z) \rightarrow p(x,z) \}$ is bounded. Note that the body of $R_2$  contains `disconnected' atoms and all the atoms produced by $R_1$ are also produced by $R_2$; moreover, for any instance $\instance$, all the atoms producible by $R_2$ are produced at the first breadth-first step of the chase. That is why $\ruleset_3$ is bounded with bound $1$.    
\end{example}

%Piece-unifier (mgpu needed ?) - between queries but also rules. 
%Cf. example DL 2016 for piece-unifier
%Define here the query associated with a rule body? 
%Rewriting - most general rewriting
%Correspondence I, R |= Q, chase(I,R) |= Q, I |= rewritings(R,Q)

%strongly redundant?

%\later{The following is used only once, so it is better  to define it where we use it:\\
%arity of a term (number of atoms in which it occurs)}
%

%%% Local Variables:
%%% TeX-master: "krcr-main"
%%% End:

\section{Characterization of parallelisable rule sets}\label{sec-parallel}

%%%

% !TEX root = krcr-main.tex

In this section, we define parallelisable rule sets and motivate the introduction of a new class of existential rules, namely pieceful. We show that parallelisable rule sets are exactly those rule sets that are both bounded and pieceful.

\subsection{Parallelisable Rule Sets}
%%%%%%%
Parallelisability intuitively means that a sound superset of the chase can be obtained in a single breadth-first step by a finite set of rules independent from any instance. 

\begin{definition}[Parallelisability]
 A set of rules $\ruleset$ is \emph{parallelisable} if there exists a finite rule set $\ruleset'$ such that for any instance $\instance$:
 \begin{enumerate}
  \item there is an injective homomorphism from $\chase{\infty}{\instance}{\ruleset}$ to $\chase{1}{\instance}{\ruleset'}$; 
  \item there is a homomorphism from $\chase{1}{\instance}{\ruleset'}$ to $\chase{\infty}{\instance}{\ruleset}$.
 \end{enumerate}
 Such rule set $\ruleset'$ is said to \emph{parallelise} $\ruleset$.  
\end{definition}

\vspace*{-0.5cm}
\paragraph{Note on this definition.} A more powerful notion of parallelisability could be obtained by dropping the injectivity requirement in Point 1. Then, $\chase{1}{\instance}{\ruleset'}$ would 
 be equivalent to $\chase{\infty}{\instance}{\ruleset}$ but would not necessarily include it. 
%, as $\chase{1}{\instance}{\ruleset'}$ would be logically equivalent to $\chase{\infty}{\instance}{\ruleset}$. 
This notion could also be obtained without changing the definition but considering a stronger chase variant for $\chase{\infty}{\instance}{\ruleset}$, namely the core chase, which produces a minimal canonical model of $\instance \cup \ruleset$ \cite{deutsch:08}.  However, the core chase is not monotonic and its result may even not be obtainable by any $\ruleset$-derivation. We have chosen here to consider the well-behaved so-chase. 

%\mt{Should add a small discussion in the conclusion regarding the injectivity of the first homomorphism.}\mlm{Done}
\begin{proposition}\label{prop-parallelisable-imply-bounded}
 If $\ruleset$ is parallelisable, then it is bounded.
\end{proposition}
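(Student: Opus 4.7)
The plan is to show that parallelisability implies both chase-finiteness and fus, and then invoke the equivalence ``chase-finite plus fus iff bounded, for single-piece rules'' recalled in the Preliminaries (following BLMTUG19). Chase-finiteness is immediate: for any instance $\instance$, the injective homomorphism from $\chase{\infty}{\instance}{\ruleset}$ into the necessarily finite set $\chase{1}{\instance}{\ruleset'}$ forces $\chase{\infty}{\instance}{\ruleset}$ itself to be finite, hence to stabilise at some step $k_\instance$.

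For fus, fix a Boolean CQ $q$. First I would verify that $\instance,\ruleset \models q$ iff $\chase{1}{\instance}{\ruleset'} \models q$ for every instance $\instance$: the forward direction composes a witnessing homomorphism $q \to \chase{\infty}{\instance}{\ruleset}$ with the injective homomorphism of Point 1; the backward direction composes with the homomorphism of Point 2, both using that homomorphisms preserve CQ entailment. Next I would observe that $\chase{1}{\instance}{\ruleset'} \models q$ is first-order expressible over $\instance$: each atom of $\chase{1}{\instance}{\ruleset'}$ is either in $\instance$ or is produced by a single trigger of some rule in the finite set $\ruleset'$, so a homomorphism from $q$ into $\chase{1}{\instance}{\ruleset'}$ can be described by a finite disjunctive case analysis, for each atom of $q$ either mapping it directly into $\instance$ or unifying it with a head atom of some $R' \in \ruleset'$ and pulling back to the corresponding instantiation of $\body{R'}$. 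This yields a UCQ $Q_q$ over the original vocabulary with $\instance,\ruleset \models q$ iff $\instance \models Q_q$; hence $\ruleset$ is fus.

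Applying the equivalence recalled in the Preliminaries then yields boundedness. The main trick in this approach is recognising that the entire content of the parallelising set $\ruleset'$ amounts, at the level of query answering, to a single-step and therefore first-order rewriting; after that, both chase-finiteness and fus drop out easily and the reduction to the existing single-piece result is routine. The alternative of constructing a uniform chase-depth bound for $\ruleset$ directly, without going through fus, would be technically more involved, since one would have to reconstruct within $\chase{k}{\instance}{\ruleset}$ the atoms that $\ruleset'$ produces in one step.
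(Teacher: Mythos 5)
Your argument is correct, but it takes a different route from the paper's in its second half. Both proofs start identically: the injective homomorphism of Point~1 into the finite set $\chase{1}{\instance}{\ruleset'}$ gives chase-finiteness. From there the paper stays self-contained and constructs a uniform bound directly: for each $R' \in \ruleset'$ it freezes $\body{R'}$ and uses the homomorphism of Point~2 to find $k_{R'}$ such that $\head{R'}$ maps into $\chase{k_{R'}}{\mathrm{freeze}(\body{R'})}{\ruleset}$, then takes $k = \max_{R'} k_{R'}$ as the bound (a critical-instance argument exploiting that rules are constant-free, so the mapping transfers to every trigger on every instance). You instead establish fus, via the observation that parallelisability reduces entailment to the single-step chase $\chase{1}{\instance}{\ruleset'}$, which is UCQ-rewritable because $\ruleset'$ is finite and non-recursive in this one-step use; then you invoke the equivalence recalled in the Preliminaries (chase-finite plus fus equals bounded for single-piece rules, following Bourhis et al.), which is legitimate given the paper's blanket single-piece assumption on $\ruleset$. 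What your route buys is a conceptually clean reduction that avoids constructing any explicit bound; what it costs is reliance on that nontrivial external equivalence, whereas the paper's proof is elementary and yields the bound $k$ explicitly. One small point to tighten in your rewriting step: the case analysis cannot be purely atom-by-atom, since query atoms sharing nulls must be unified jointly with (a subset of) the head of a single rule of $\ruleset'$ (and, for the semi-oblivious naming, possibly with heads of triggers of the same rule agreeing on the frontier); this is the usual piece-level unification and it preserves finiteness of the disjunction, so the conclusion stands.
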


\begin{proof}
 Let us assume that $\ruleset$ is parallelised by $\ruleset'$. For any instance $\instance$, $\chase{1}{\instance}{\ruleset'}$ is finite because $\ruleset'$ is finite. As there is an injection from  $\chase{\infty}{\instance}{\ruleset}$ to $\chase{1}{\instance}{\ruleset'}$, $\chase{\infty}{\instance}{\ruleset}$ is finite.
 Now, for each rule $R \in \ruleset'$, let $\instance = \mathrm{freeze}(\body{R})$: since $R$ is applicable to $\instance$ and  $\chase{1}{\instance}{\ruleset'}$ maps to $\chase{\infty}{\instance}{\ruleset}$, there is $k_R$ such that $\head{R}$ maps to $\chase{k_R}{\instance}{\ruleset}$. Let $k$ be the maximum $k_R$ over all rules $R \in \ruleset$.  For any instance $\instance$, there is $n$ such that $\chase{\infty}{\instance}{\ruleset} = \chase{n}{\instance}{\ruleset}$ with $n \leq k$. Hence, $\ruleset$ is bounded (by $k$). 
\end{proof}

The converse is however not true, as witnessed by the following example.

\begin{example}[Prime Example]\label{ex-prime}
Let $\ruleset = \{R_1, R_2\}$ where:
 
 \vspace*{-0.5cm}
\begin{quote}
\item  $R_1: A(x) \rightarrow \exists z~p(x,z)$
\item $R_2 : p(x,z) \wedge B(y) \rightarrow r(z,y)$
\end{quote}
 %\mt{Space too large before rules, but did not find how to reduce it.}
$\ruleset$ is bounded but not parallelisable. Let, for any $n$, $\instance_n = \{A(a),B(b_1),\ldots,B(b_n)\}$. There is a null in $\chase{\infty}{\instance_n}{\ruleset}$ that appears in $n+1$ atoms (apply $R_1$ once, which creates a null, then apply $R_2$ $n$ times). Hence, there is no finite set of rules $\ruleset'$ such that $chase_\infty(\instance_n,\ruleset)$ injectively maps to $chase_1(\instance,\ruleset')$ for any $n$.
%which prevents $\ruleset$ to be parallelisable.
\end{example}

Motivated by this example, we now introduce a new class of rules, called \emph{pieceful}.  
%The name of the class comes from the fact that it can be seen as a restriction of the class of \emph{greedy bounded treewidth sets}, which was introduced independently for other purposes\cite{}. 

\subsection{The Pieceful Class}
%%%%
 In short, pieceful rule sets ensure that for any rule application, the entire rule frontier is mapped either to terms from the initial instance or to terms that occur in atoms brought by a single previous rule application. 

\begin{definition}[Pieceful Derivation]
 An $\ruleset$-derivation $(\instance_0 = \instance), \ldots, \instance_k$, is \emph{pieceful} if for all $i$ with $0 < i \leq k$, 
 either $\pi_i(\Fr{R_i}) \subseteq \Term{\instance}$ or there is $j < i$ such that 
$\pi_i(\Fr{R_i}) \subseteq \Term{A_j}$,  
where $A_j = \pi_j^{\mathrm{safe}}(\head{R_j})$.
%\later{Check consistency of use of $\ruleset$-derivation: $(R_i, \pi_i)$ is applied on $I_{i-1}$}
%\mb{Why the case j = i is allowed ? And why i = k and i = 0 i$(s not allowed ? It seems confusing. }
\end{definition}

%Note that the only difference with greedy derivations is that in this latter case, the image of the frontier should be in $terms(\instance) \cup A_j$ for some $j$. 

\begin{definition}[Pieceful Rule Set]
 A rule set $\ruleset$ is \emph{pieceful} if (for any instance $\instance$) any $\ruleset$-derivation (from $\instance$) is pieceful. 
\end{definition}

% \mlm{Erreur example 5, check again! It was $\instance = \{A(a), B(a)\}$. Then $\Fr{R_2} = \{y,z\}$ is mapped to $\{a,\nu_0\}$, but both occur in $\pi_2^{safe}(\head{R_2})$, hence it is pieceful. 
% I changed to $\instance = \{A(a), B(b)\}$. }

\begin{example} Consider again Example \ref{ex-prime}. From $\instance = \{A(a), B(b)\}$, one builds a non-pieceful derivation  
$(\instance_0 = \instance) (R_1, \pi_1, \instance_1) (R_2, \pi_2, \instance_2)$. Indeed, $(R_1, \pi_1)$ produces $\pi_1^\mathrm{safe}(\head{R_1}) = A_1 = \{p(a,\nu_0)\}$, with $\nu_0$ the null created from $z$. Then, $\pi_2 = \{ x \mapsto a, y \mapsto b, z \mapsto  \nu_0\}$. Since $\Fr{R_2} = \{y,z\}$ is mapped to $\{b,\nu_0\}$, with $b \not \in \Term{A_1}$ and $\nu_0 \not \in \Term{\instance}$, this derivation is not pieceful, hence neither is $\ruleset$. 
\end{example}

Interestingly, when a rule set is not pieceful, it is possible to build instances that generalize the situation from Example~\ref{ex-prime}, so that the chase creates  nulls 
that occur in an arbitrarily large  number of atoms, as shown by next Proposition~\ref{prop-not-sgbts-unbounded-arity-of-nulls}. As will become clear later (Proposition~\ref{prop-sgbts-chase-k-pieces}), the reciprocal statement is true when the rule set is chase-finite. 

 \begin{proposition}\label{prop-not-sgbts-unbounded-arity-of-nulls}
 If $\ruleset$ is not pieceful then, for all $n$, there exist an instance $I_n$ and a null $\nu_n$ such that $\nu_n$ occurs in at least  $n$ atoms in $chase_\infty(I_n,\ruleset)$. 
\end{proposition}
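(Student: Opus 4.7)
The plan is to start from a non-pieceful derivation, isolate its first bad step, and then pump that step into $n$ parallel triggers that all involve the same null. Formally, since $\ruleset$ is not pieceful, there exist an instance $\instance$ and a non-pieceful $\ruleset$-derivation from $\instance$. I take such a derivation $D = (\instance_0 = \instance), (R_1, \pi_1, \instance_1), \ldots, (R_k, \pi_k, \instance_k)$ of \emph{minimal} length: every step $i < k$ is then pieceful, while step $k$ is not. Because $\pi_k(\Fr{R_k}) \not\subseteq \Term{\instance}$, this set must contain a null $\nu$ (otherwise all its elements would be constants, hence in $\Term{\instance}$). Let $j^\star < k$ be the unique step creating $\nu$, so $\nu \in \Term{A_{j^\star}}$; since $\pi_k(\Fr{R_k}) \not\subseteq \Term{A_{j^\star}}$, there is also a term $t^\star \in \pi_k(\Fr{R_k}) \setminus \Term{A_{j^\star}}$, and I fix $x^\star, y^\star \in \Fr{R_k}$ with $\pi_k(x^\star) = \nu$ and $\pi_k(y^\star) = t^\star$.

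Next I would build $I_n$ so that (i) the prefix of $D$ producing $\nu$ at step $j^\star$ is replayed exactly once with the \emph{same} semi-oblivious name $\nu = z_{(R_{j^\star}, \pi_{j^\star}|_{\Fr{R_{j^\star}}})}$, while (ii) the remaining triggers leading up to step $k$ are replicated $n$ times with pairwise distinct images of $y^\star$. To this end, I extract from $D$ the \emph{$\nu$-support} $D^\nu$: the sub-derivation obtained by recursively tracing back the body atoms needed to fire each step that contributes to $A_{j^\star}$. Piecefulness of every step $i < k$ makes this recursion well-defined and it terminates in a set $C^\nu$ of constants of $\instance$ that, together with the nulls fixed by semi-obliviousness, are enough to replay $D^\nu$ identically. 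Now for $i = 1, \ldots, n$, let $\sigma_i$ be a substitution that fixes $C^\nu$ pointwise and renames each remaining constant of $\instance$ to a fresh copy-specific constant, with the non-shared ranges pairwise disjoint, and set $I_n = \bigcup_{i=1}^{n} \sigma_i(\instance)$.

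In $\chase{\infty}{I_n}{\ruleset}$, the $\nu$-support $D^\nu$ is then replayed once using the shared constants, producing the exact same null $\nu$. For each $i$, the remaining steps of $D$ up to $k-1$ are replayed on $\sigma_i(\instance)$, making the $\sigma_i$-variant of $\pi_k(\body{R_k})$ available, and $R_k$ fires with $x^\star \mapsto \nu$ and $y^\star \mapsto \sigma_i(t^\star)$ (a fresh constant if $t^\star$ is a constant, a fresh null generated in copy $i$ by semi-obliviousness if $t^\star$ is a null created outside $A_{j^\star}$). The $n$ restrictions $\pi_k^{(i)}|_{\Fr{R_k}}$ are pairwise distinct, so the semi-oblivious chase treats them as distinct triggers and produces $n$ separate head-atom sets. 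The single-piece property of $R_k$ then forces at least one head atom containing $x^\star$ (hence $\nu$) to vary with $i$: either $|\head{R_k}|=1$, in which case that unique head atom also contains $y^\star$; or every head atom contains an existential variable whose renamed null depends on $\pi_k^{(i)}|_{\Fr{R_k}}$. Either way, this gives at least $n$ pairwise distinct atoms of $\chase{\infty}{I_n}{\ruleset}$ containing $\nu$, as required.

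The main obstacle will be the rigorous definition of $D^\nu$ and $C^\nu$, and checking that semi-obliviousness really produces the same $\nu$ in $I_n$ as in $D$. A secondary subtlety arises when $t^\star$ is itself a null: one must trace back an analogous ``$t^\star$-support'' whose constants are disjoint from $C^\nu$ (this being possible precisely because $t^\star \notin \Term{A_{j^\star}}$), so that the $\sigma_i$-renamings yield $n$ genuinely distinct analogs of $t^\star$. Throughout, the minimality of $D$ is essential, as it is what guarantees the piecefulness of every intermediate step on which the well-definedness of these support sub-derivations rests.
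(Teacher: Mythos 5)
Your overall strategy (isolate the first non-pieceful step, then multiply the violating trigger over copies that agree on what generates the null $\nu$ but disagree on another frontier term) is the right one, and your closing single-piece argument for why the $n$ applications contribute $n$ distinct atoms through $\nu$ is a sound elaboration of a point the paper leaves implicit. However, your construction of $I_n$ has a genuine gap. You take copies of the \emph{original} instance $\instance$ glued on the support constants $C^\nu$, expecting the $n$ counterparts of $t^\star$ to be pairwise distinct. Nothing guarantees this: $t^\star$ may be a constant belonging to $C^\nu$, or a null all of whose generating triggers use only constants of $C^\nu$ --- in which case, by the very semi-oblivious naming you invoke, every copy regenerates the \emph{same} $t^\star$, the $n$ triggers for $R_k$ coincide, and the construction collapses. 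Your proposed repair (a ``$t^\star$-support'' constant-disjoint from $C^\nu$, ``possible precisely because $t^\star \notin \Term{A_{j^\star}}$'') rests on a false implication: $t^\star \notin \Term{A_{j^\star}}$ says nothing about which constants are needed to produce $t^\star$. Concretely, take $\ruleset = \{A(x) \rightarrow \exists z~p(x,z),\ A(x) \rightarrow \exists w~q(x,w),\ p(x,z) \land q(x',w) \rightarrow r(z,w)\}$, $\instance = \{A(a)\}$, and the length-$3$ derivation applying the rules in this order: the third step is the first non-pieceful one, $\nu$ is one of the two nulls, $t^\star$ is the other, and $C^\nu = \{a\}$ whichever you choose. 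There is no remaining constant to rename, so your $I_n = \{A(a)\}$ for every $n$, whereas every null of $\chase{\infty}{\{A(a)\}}{\ruleset}$ occurs in exactly two atoms. (The proposition is still true for this set --- $\{A(a_1),\ldots,A(a_n)\}$ is a witness --- but your construction does not find it.)

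The paper's proof avoids exactly this difficulty by two choices you do not make. First, it takes as pivot the step creating the \emph{last} null hit by the violating frontier (say step $j$, with trigger $(R_j,\pi_j)$), so the distinguishing term --- which exists because the violating frontier image is not contained in $\Term{A_j}$, whereas $\pi_j(\Fr{R_j}) \subseteq \Term{A_j}$ --- is a term already present in the instance obtained just before step $j$. Second, and decisively, it does not replay from $\instance$: it builds $I_n$ from copies of a \emph{freezing} of that pre-step-$j$ instance, glued exactly on $\pi_j(\Fr{R_j})$. Freezing turns every earlier null (such as $t^\star$ in the example above) into a constant that can simply be renamed apart per copy, with no need to regenerate it by the chase, while the shared frontier image makes all copies' replays of step $j$ coincide and thus produce one single shared null. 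This removes both the ``same $\nu$ across copies'' obstacle you flag as your main difficulty and the $t^\star$-collision just described. A minor further remark: piecefulness of the prefix is not what makes your support-tracing well defined (that recursion is well defined for any derivation), and the paper's construction does not rely on it either; only the identification of the first violating step matters.
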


% \mlm{Statement of the proposition: I changed "belongs to more than $n$ atoms" to "at least $n$ atoms" to be more in line with the writing of the proof.}

\begin{proof} (Sketch)
 %Let us assume that $\ruleset$ is not pieceful. There thus exists a non-pieceful derivation. 
 %that is connected to at least $i$ terms of $\instance_i$.
 Let $\instance^0,\ldots,\instance^{n-1},\instance^{n}$ be a non-pieceful derivation such that $\instance^0,\ldots,\instance^{n-1}$ is pieceful, \emph{i.e.}, $(R_n,\pi_n)$ applied on $\instance^{n-1}$ is the first application that violates the pieceful condition. 
 From that derivation, we build a set of instances $\{\instance_i\}$ such that the chase of $\instance_i$ contains a null that occurs in at least $i$ (distinct) atoms.% (together with at least $i$ terms from $\instance_i$).
 
 %\mlm{Alert: dans la R-derivation des prelims, $(R_n,\pi_n)$ s'applique sur $\instance^{n-1}$ et produit $\instance^n$. Où est-ce le plus simple de changer? Preuves ou prelims (auquel cas il faut aussi changer la def de pieceful derivation)}
 At least one frontier variable of $R_n$ is mapped to a null. Let $k$ be the largest integer such that $(R_k, \pi_k)$ introduces a null (in $\instance^k$) to which a frontier variable of $R_n$ is mapped by $\pi_n$. Let  $\nu^*$ be this null. %, created from the existential variable $z^*$. 
 We define by induction on $i$ the instance $\instance_i$ as follows, 
 where each  $f_i(\instance^{k-1})$ denotes a freezing of $\instance^{k-1}$: 
%\mlm{So it should be $\instance^{k-1}$ instead of $\instance^{k}$ in the following ??}
  \emph{(i)} $\instance_0 = f_0(\instance^{k-1})$ and \emph{(ii)} for any $i \geq 1$, $\instance_i = \instance_{i-1} \cup f_i(\instance^{k-1})$ where: if $x \in \pi_k(\Fr{R_k})$, then $f_i(x) = f_0(x)$, otherwise $f_i(x)$ is a fresh constant (w.r.t. the whole construction).
% \begin{itemize}
%  \item if $x \in \pi_k(\Fr{R_k})$, then for all $i$, $f_i(x) = f_0(x)$,
%  \item if $x \not \in \pi_k(\Fr{R_k})$, then $f_i(x)$ is a fresh constant (w.r.t. the whole construction).
% \end{itemize}
  Intuitively, $\instance_i$ is built from $i+1$ copies of $\instance^{k-1}$, where all the terms have been freshly renamed except for those in $\pi_k(\Fr{R_k})$. From any $\instance_i$, we can build a derivation that mimics the initial derivation $\instance^{k-1}, \ldots,  \instance^{n}$ and show that $\nu$ occurs in $i+1$ atoms.
\end{proof}

It follows that any parallelisable set is pieceful:

% \mlm{I transformed the corollary into a proposition since it needs an additional argument.}

\begin{proposition}\label{prop-parallelisable-imply-pieceful}
 If $\ruleset$ is parallelisable, then it is pieceful. 
\end{proposition}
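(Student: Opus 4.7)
The plan is to prove the contrapositive by invoking Proposition~\ref{prop-not-sgbts-unbounded-arity-of-nulls}: assuming $\ruleset$ is not pieceful, I will exhibit an obstruction to any finite $\ruleset'$ parallelising it. The key idea is a degree argument: nulls produced in a single breadth-first step have bounded degree, whereas a non-pieceful rule set creates nulls of arbitrary degree.

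First I would establish that, under any injective homomorphism $h$ from $\chase{\infty}{\instance}{\ruleset}$ to $\chase{1}{\instance}{\ruleset'}$, every null must be sent to a null. Since rules of $\ruleset'$ contain no constants, $\Const{\chase{1}{\instance}{\ruleset'}} \subseteq \Const{\instance}$, while $h$ fixes every constant of $\instance$. Were a null $\nu$ of $\chase{\infty}{\instance}{\ruleset}$ sent to a constant $c \in \Const{\instance}$, then $h(c) = c = h(\nu)$ would contradict injectivity on terms. A direct consequence is that $h$ is also injective on atoms: two distinct atoms differ either in predicate or in some argument position, and in the latter case their two differing terms retain distinct images under $h$.

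Second, I would bound the degree of nulls in one step: every null $\nu'$ of $\chase{1}{\instance}{\ruleset'}$ is produced by exactly one trigger $(R',\pi')$ with $R' \in \ruleset'$ and only occurs among the atoms of $\pi'^{\mathrm{safe}}(\head{R'})$, hence in at most $M := \max_{R' \in \ruleset'} |\head{R'}|$ atoms, a finite bound since $\ruleset'$ is finite.

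Combining the two ingredients, I would apply Proposition~\ref{prop-not-sgbts-unbounded-arity-of-nulls} with $n = M+1$ to obtain an instance $\instance_{M+1}$ and a null $\nu$ appearing in at least $M+1$ atoms of $\chase{\infty}{\instance_{M+1}}{\ruleset}$; the image $h(\nu)$ would then be a null of $\chase{1}{\instance_{M+1}}{\ruleset'}$ lying in at least $M+1$ distinct atoms (by atom-injectivity), contradicting the bound $M$. The main obstacle is really the first step: pinning down that an injective homomorphism must send nulls to nulls hinges essentially on the paper-wide assumption that rules contain no constants; otherwise a high-degree null could be absorbed by a well-connected constant of the instance and the degree contradiction would collapse.
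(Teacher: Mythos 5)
Your proposal is correct and follows essentially the same route as the paper: bound the number of atoms in which a null can occur in $\chase{1}{\instance}{\ruleset'}$ by the maximal head size of $\ruleset'$, then conclude by the contrapositive of Proposition~\ref{prop-not-sgbts-unbounded-arity-of-nulls}. You merely make explicit the steps the paper leaves implicit (injectivity forces nulls to map to nulls, hence atom-injectivity), which is a faithful elaboration rather than a different argument.
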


\begin{proof}
Assume that $\ruleset$ is parallelisable by $\ruleset'$.  Notice that for any $I$, any null in $\chase{\infty}{\instance}{\ruleset}$
 occurs in at most $h$ atoms, where $h$ is the maximal size of a rule head in $\ruleset'$ (indeed, two distinct rule applications from $\ruleset'$ cannot share any null). 
 % This contradicts Prop. \ref{prop-not-sgbts-unbounded-arity-of-nulls}. 
 Since $h$ is bounded independently from $I$, by contraposite of Proposition~\ref{prop-not-sgbts-unbounded-arity-of-nulls}, $\ruleset$ is pieceful.
\end{proof}

%\later{add Figure: syntactical inclusions in the gbts landscape, possibly with data complexity, cf. KI 2021}

\begin{figure}
\begin{center}
 \includegraphics[height = 8 cm]{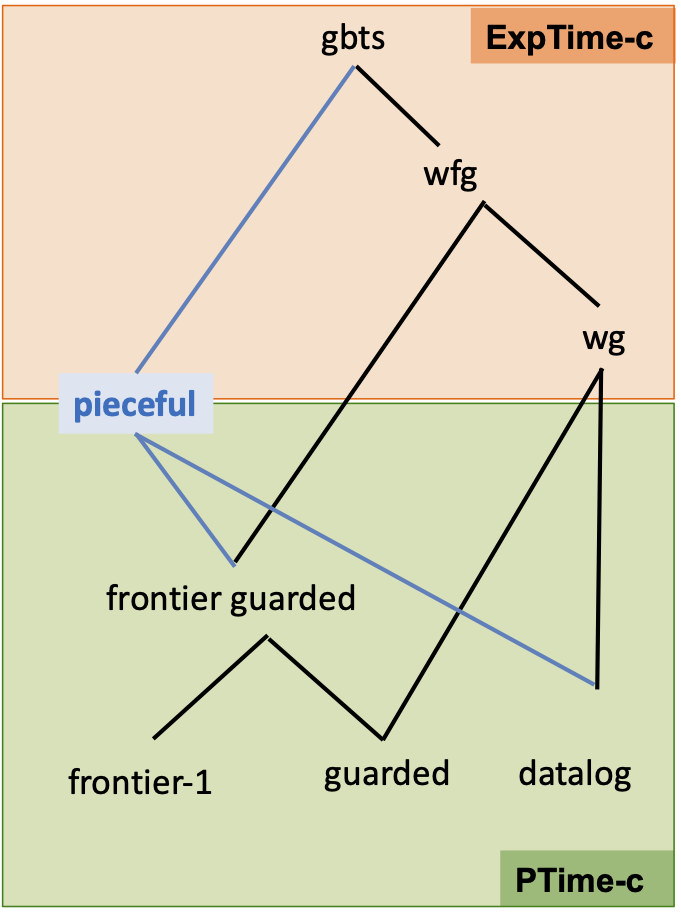} % %width = \linewidth

\caption{The gbts classes ordered by syntactic inclusion (and the data complexity of the associated Boolean CQ answering problem)}
\label{fig-gbts}     
\end{center}
\end{figure}

How does the pieceful class fit in the existential rule landscape? Clearly, pieceful rule sets are \emph{greedy-bounded-treewidth sets (gbts)} \cite{bmrt:11,rtbm-arxiv:14}\footnote{Indeed, a pieceful derivation is a specific greedy derivation and gbts are those sets for which all derivations are greedy.}, an expressive family for which CQ answering is decidable. The gbts class includes some prominent existential classes, see Figure~\ref{fig-gbts}.  
There are three basic classes: (plain) \emph{datalog} (e.g., \cite{Alice}), in which there are no existential variables at all; \emph{guarded} rules, in which all the variables from a rule body are guarded, \emph{i.e.}, jointly occur in a body atom \cite{Cali2009,cgk:13}; \emph{frontier-one} rules, in which the frontier of a rule is restricted to (at most) one variable \cite{blms:09}. 
%We also mention \emph{linear} rules, whose body is restricted to a single atom.
%\later{check if we refer to linear somewhere}. 
Combining guardedness and frontier-based restrictions leads to \emph{frontier-guarded} rules, in which only the frontier of a rule needs to be guarded \cite{blm:10}. The guardedness  condition is further relaxed in \emph{weakly (frontier) guarded (w(f)g)} rules, in which only the (frontier) variables possibly mapped to nulls during the chase need to be guarded. 

As pictured in Figure~\ref{fig-gbts}, the pieceful class includes datalog and frontier-guarded, but not wg (Example \ref{ex-prime}  is wg) and it is not difficult to see that it is actually incomparable with wfg.% (see also Section \ref{sec-conc}). 

\begin{proposition}
Any set of frontier-guarded existential rules or of datalog rules is pieceful.
\end{proposition}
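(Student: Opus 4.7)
The plan is to handle the two cases separately, each by a direct verification of the piecefulness condition on an arbitrary derivation.

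For plain datalog, the argument is immediate: no rule in $\ruleset$ has existential variables, so no null is ever introduced during a derivation from any instance $\instance$. Thus every term appearing in every $\instance_i$ is already in $\Term{\instance}$, and in particular $\pi_i(\Fr{R_i}) \subseteq \Term{\instance}$ for every step $i$, so the first disjunct of the pieceful condition is always satisfied.

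For frontier-guarded rules, I would fix an arbitrary $\ruleset$-derivation $(\instance_0=\instance), (R_1,\pi_1,\instance_1), \ldots, (R_k,\pi_k,\instance_k)$ and check the pieceful condition at each step $i$. By frontier-guardedness, there exists an atom $a \in \body{R_i}$ containing every variable of $\Fr{R_i}$; hence $\Term{\pi_i(a)} \supseteq \pi_i(\Fr{R_i})$. Since $(R_i,\pi_i)$ is a trigger for $\instance_{i-1}$, we have $\pi_i(a) \in \instance_{i-1} = \instance \cup \bigcup_{j<i} A_j$, where $A_j = \pi_j^{\mathrm{safe}}(\head{R_j})$. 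Either $\pi_i(a) \in \instance$, which gives $\pi_i(\Fr{R_i}) \subseteq \Term{\instance}$, or $\pi_i(a) \in A_j$ for some $j < i$, which gives $\pi_i(\Fr{R_i}) \subseteq \Term{A_j}$. In both cases the pieceful condition holds at step $i$, so the derivation is pieceful.

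Since this holds for every derivation starting from every instance, the rule set is pieceful by definition. There is no real obstacle in the argument: the only subtlety is noticing that the single guarding atom is enough to ``anchor'' the whole frontier into a single previously produced head image (or into the initial instance), which is precisely what piecefulness requires. Note also that we implicitly use that, in the semi-oblivious chase, $A_j$ contains all terms of $\pi_j^{\mathrm{safe}}(\head{R_j})$ including the images of frontier variables, so the inclusion $\pi_i(a) \in A_j$ really does witness $\pi_i(\Fr{R_i}) \subseteq \Term{A_j}$.
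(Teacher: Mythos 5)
Your proof is correct: the datalog case follows because no nulls are ever created, and for frontier-guarded rules the guard atom of the frontier, being mapped into $\instance_{i-1} = \instance \cup \bigcup_{j<i} A_j$, lands either in $\instance$ or in a single $A_j$, which anchors the whole image of the frontier exactly as the pieceful condition demands. The paper states this proposition without proof, and your direct verification is precisely the straightforward argument the authors evidently had in mind, so there is nothing to add or compare.
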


% \begin{proof} 
%   For Datalog rules, no null is ever created, hence any derivation is pieceful. For frontier-guarded rules, we notice that whenever a rule is applicable, any guard of the frontier must be mapped to an atom, say $g$, which is either produced by a rule application $(R_i,\pi_i)$ or is in $I$. 
%  Hence, either $\Term{g} \subseteq \Term{\pi_i^{safe}(\head{R_i})}$ or $\Term{g} \subseteq \Term{I}$, which shows that any derivation is pieceful.
%  \end{proof}

\subsection{Characterization of Parallelisability} \label{sec-charac}
%%%%% 
We first point out that the chase of any instance $\instance$ is equal to the union of 
%decomposed into 
 its \emph{pieces} (w.r.t. nulls). 
% the  terms in $\instance$, 
%\emph{i.e.}, $\chase{\infty}{\instance}{\ruleset}$ is the union of these pieces. 
A fundamental property of pieceful rules is that the application of a rule $B \rightarrow H$ at level $i$ can \emph{never connect two pieces} of $\chase{i-1}{\instance}{\ruleset}$; when it does not produce an atom already present in $\chase{i-1}{\instance}{\ruleset}$, either it creates a new piece of size $|H|$ (when the frontier is mapped to terms from $\instance$), or it makes an existing piece grow by $|H|$ atoms (when the frontier is mapped to terms with at least one null). 

\begin{proposition}\label{prop-sgbts-chase-k-pieces}
 If $\ruleset$ is pieceful then for any instance $\instance$ and integer $k$, the maximal size of a piece in $\chase{k}{\instance}{\ruleset}$ is bounded independently from $\instance$. 
\end{proposition}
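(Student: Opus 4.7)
The plan is to prove the bound by induction on $k$, where the key structural fact is that, thanks to piecefulness combined with the single-piece assumption on rules, no rule application ever merges two distinct pieces of the current extended instance. Let $h$, $a$, and $m$ be the maximum head size, maximum predicate arity, and maximum frontier size across rules in $\ruleset$; set $T = h \cdot a$ (a bound on $|\Term{A}|$ when $A$ is the output of a single trigger) and $F = |\ruleset| \cdot T^m$ (a bound on the number of semi-oblivious triggers whose frontier image is contained in a given set of $T$ terms). The goal is to show that the bound $M_k$ on piece size in $\chase{k}{\instance}{\ruleset}$ satisfies $M_k \leq (1 + Fh)^k$, which is independent of $\instance$.

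First I would establish the structural lemma: for any trigger $(R, \pi)$ applied during the chase, its output $A = \pi^{\mathrm{safe}}(\head{R})$ lies entirely inside a single piece of the extended instance produced up to that point. The single-piece assumption on $R$ ensures that the fresh nulls created for $\exist{R}$ connect all atoms of $A$ internally, so the analysis reduces to the frontier image. By piecefulness, $\pi(\Fr{R})$ either lies in $\Term{\instance}$ (in which case $A$ contains no old nulls and forms a brand new piece) or lies in $\Term{A_{t'}}$ for some earlier trigger $t'$ (in which case every old null appearing in $A$ already sits in the piece that contains $A_{t'}$, so $A$ simply extends that piece). Consequently, a new rule application can create a new piece or enlarge an existing one, but it can never merge two pieces.

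With non-merging in hand, the induction runs as follows. Base case: $\chase{0}{\instance}{\ruleset} = \instance$ contains only ground atoms, so every piece is a singleton and $M_0 = 1$. For the inductive step, let $P$ be a piece of $\chase{k}{\instance}{\ruleset}$ and set $P_{k-1} = P \cap \chase{k-1}{\instance}{\ruleset}$. By non-merging, $P_{k-1}$ is either empty (so $P$ is the output of a single trigger applied at step $k$, giving $|P| \leq h$) or a single piece of $\chase{k-1}{\instance}{\ruleset}$ of size at most $M_{k-1}$. In the latter case, the new atoms $P \setminus P_{k-1}$ come from triggers $t$ whose frontier image lies in $\Term{A_{t'}}$ for some previous trigger $t'$ with $A_{t'} \subseteq P_{k-1}$. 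At most $|P_{k-1}| \leq M_{k-1}$ triggers $t'$ contribute atoms to $P_{k-1}$; for each of them, the semi-oblivious identification of triggers with pairs $(R, \pi_{|\Fr{R}})$ admits at most $F$ extending triggers; each extension adds at most $h$ atoms. This yields $M_k \leq M_{k-1}(1 + Fh)$ and hence $M_k \leq (1 + Fh)^k$.

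The main obstacle I anticipate is a clean formulation of the structural lemma, in particular making sure it survives all corner cases: the datalog case (where $\exist{R} = \emptyset$ and the single-piece assumption forces $|\head{R}| = 1$, so $A$ is a single atom whose piece membership is governed purely by the frontier image), and the fact that piecefulness is stated for arbitrary $\ruleset$-derivations while we need to apply it to every trigger arising in the breadth-first chase (one orders all triggers up to step $k$ sequentially and invokes piecefulness on the resulting derivation). Once the lemma is cleanly established for semi-oblivious triggers applied at any step, the remaining counting is routine and the bound depends only on $k$ and the fixed combinatorial parameters of $\ruleset$.
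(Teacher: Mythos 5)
Your proposal is correct and follows essentially the same route as the paper: induction on the number of breadth-first steps, using the key fact that under piecefulness (with single-piece heads) a rule application never merges pieces but only creates a new piece or grows one, and then counting semi-oblivious triggers attached to a piece to get a bound depending only on $k$ and $\ruleset$. Your recurrence is organized per earlier trigger output (giving $M_k \leq M_{k-1}(1+Fh)$, in fact a slightly tighter bound than the paper's $(P(i)\cdot a)^{\mathrm{fr}}\cdot h\cdot|\ruleset|$), but the argument is the same in substance; the only cosmetic point is that for datalog rules several triggers may share one output atom, so one should count distinct output sets $A_{t'}$ rather than triggers, which does not affect the bound.
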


\begin{proof}(Sketch)
Given $\instance$ and $\ruleset$, 
%we note $chase_i = chase_i(\instance,\ruleset)$ and 
we note $P(i)$ the maximal size of a piece in $\chase{i}{\instance}{\ruleset}$. We prove that  $P(0) = 1$ \mlmm{and, for $i \geq 0, P(i+1) \leq (P(i)\times {a})^{\mathrm{fr}} \times h \times |\ruleset|$}, where $a$ is the maximal arity of a predicate, $\mathrm{fr}$ and $h$ are the maximal size of a rule frontier in $\ruleset$ and a rule head in $\ruleset$, respectively.
\end{proof}

\begin{corollary} 
If $\ruleset$ is pieceful and bounded then for any instance $\instance$, the maximal size of a piece in $\chase{\infty}{\instance}{\ruleset}$ is bounded independently from $\instance$.
\end{corollary}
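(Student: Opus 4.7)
The plan is to combine the quantitative bound from Proposition~\ref{prop-sgbts-chase-k-pieces} with the uniform bound on chase depth provided by boundedness. Concretely, since $\ruleset$ is bounded, there exists an integer $k_0$ (the bound) such that for every instance $\instance$ we have $\chase{k_0}{\instance}{\ruleset} = \chase{\infty}{\instance}{\ruleset}$. The key point is that $k_0$ does not depend on $\instance$.

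Given this uniform $k_0$, I would invoke Proposition~\ref{prop-sgbts-chase-k-pieces} with $k = k_0$: since $\ruleset$ is pieceful, the maximal size of a piece in $\chase{k_0}{\instance}{\ruleset}$ is bounded by some constant $B$ that depends only on $k_0$ and on structural parameters of $\ruleset$ (the maximal predicate arity, maximal frontier size, maximal head size, and $|\ruleset|$), but not on $\instance$. Plugging $k_0$ into the recurrence $P(i+1) \leq (P(i) \cdot a)^{\mathrm{fr}} \cdot h \cdot |\ruleset|$ with $P(0) = 1$ yields such an explicit $B$.

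Combining the two observations, for any instance $\instance$ the maximal size of a piece in $\chase{\infty}{\instance}{\ruleset}$ equals the maximal size of a piece in $\chase{k_0}{\instance}{\ruleset}$, which is at most $B$, independent of $\instance$. There is essentially no obstacle here: the corollary is a straightforward composition of the two prior facts, and the only thing to be careful about is to emphasise that the bound $k_0$ supplied by boundedness is uniform in $\instance$, so the resulting piece bound $B$ is uniform as well.
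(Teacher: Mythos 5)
Your proof is correct and follows exactly the intended route: the paper states this as an immediate corollary of Proposition~\ref{prop-sgbts-chase-k-pieces} together with the uniform chase-depth bound $k_0$ supplied by (so-)boundedness, which is precisely your argument. Your emphasis that $k_0$ is uniform in $\instance$, so that the piece bound obtained from the recurrence is also uniform, is the only point of care, and you handle it properly.
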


Based on this corollary, we are now able to show that any bounded pieceful set is parallelisable. We will give another proof of this result in Sect. \ref{sec-composition} (see Cor. \ref{corollary-pieceful-bounded-imply-parallelisable}).

\begin{proposition}
\label{prop-pieceful-bounded-imply-parallelisable}
 If $\ruleset$ is pieceful and bounded then it is parallelisable. 
\end{proposition}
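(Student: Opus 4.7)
The plan is to define $\ruleset'$ so that each of its rules realises, in a single breadth-first step, one possible piece of $\chase{\infty}{\cdot}{\ruleset}$. Let $k$ be a uniform chase bound and let $M$ be the uniform bound on piece size given by the Corollary; write $b$ for the maximal body size of a rule in $\ruleset$. A first observation is that every piece $P$ of $\chase{\infty}{\instance}{\ruleset}$ is produced by a pieceful sub-derivation with at most $M$ rule applications, since each application contributing to $P$ adds the single-piece head of one rule of $\ruleset$, and $|P|\le M$. Therefore the atoms of $\instance$ used along that sub-derivation form a set of size at most $bM$, and every constant occurring in $P$ is contributed by one of these body matches.

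Next I would enumerate, up to renaming, all pieceful $\ruleset$-derivations $\Delta=(J_0=\mathrm{freeze}(B_\Delta),J_1,\dots,J_m)$ with $m\le M$, on a fixed bounded alphabet of fresh variables, such that every atom of $B_\Delta$ is actually used in $\Delta$ and $J_m\setminus J_0$ forms a single piece $H_\Delta$. For each such $\Delta$ I define a rule $R_\Delta$ whose body is the unfrozen $B_\Delta$, whose head is $H_\Delta$ with every null re-existentialised into a fresh existential variable, and whose frontier consists of the body variables appearing in $H_\Delta$. Since $\ruleset$ is finite and $m\le M$, only boundedly many terms can occur in such a derivation, so this set $\ruleset'$ is finite (null-free pieces being handled as the degenerate cases $m=0$ that copy a piece of $\instance$ into itself, plus very short datalog-style derivations).

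I would then verify the two conditions. For the homomorphism $\chase{1}{\instance}{\ruleset'}\to\chase{\infty}{\instance}{\ruleset}$, every atom in $\chase{1}{\instance}{\ruleset'}\setminus\instance$ comes from a trigger $(R_\Delta,\pi)$ with $\pi\colon B_\Delta\to\instance$; composing $\pi$ with $\Delta$ lifts the latter to a genuine pieceful $\ruleset$-derivation from $\instance$, and sending each $R_\Delta$-existential to the corresponding null of that lift yields the desired homomorphism. For the injective homomorphism $\chase{\infty}{\instance}{\ruleset}\hookrightarrow\chase{1}{\instance}{\ruleset'}$, I pick for each piece $P$ a witnessing sub-derivation $\Delta_P$; its restriction to the instance matches gives a trigger of $R_{\Delta_P}$ on $\instance$ whose safe application contains, atom by atom, a copy of $P$, and I send each null of $P$ to its counterpart.

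The main obstacle is injectivity across pieces. Two distinct pieces $P\neq P'$ could a priori invoke the same rule $R_\Delta\in\ruleset'$ via triggers $\pi,\pi'$ on $\instance$; if $\pi$ and $\pi'$ disagree on $\Fr{R_\Delta}$ the semi-oblivious naming creates disjoint fresh nulls and we are done, so the crux is to rule out the case where they agree on $\Fr{R_\Delta}$. The key observation is that $\Fr{R_\Delta}$ captures exactly the instance terms occurring inside $H_\Delta$, and these are precisely the terms that the semi-oblivious naming in $\ruleset$ reads off the frontier restrictions $\pi_i|_{\Fr{R_i}}$ along $\Delta_P$; hence coincidence on $\Fr{R_\Delta}$ forces the same null names and ultimately the same piece, so $P=P'$. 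Making this compatibility between the semi-oblivious namings in $\ruleset$ and in $\ruleset'$ fully rigorous, together with the (straightforward but bookkeeping-heavy) finiteness of the enumeration that defines $\ruleset'$, is where the technical work concentrates.
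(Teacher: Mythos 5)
Your overall strategy (enumerate, up to renaming, boundedly many canonical derivations each producing one piece, and turn each into a one-shot rule) is viable and is in fact close in spirit to the paper's second proof via composed rules $\ruleset^{\star}$ (Cor.~\ref{corollary-pieceful-bounded-imply-parallelisable}); the paper's direct proof of this proposition instead goes through query rewriting: it takes a finite set of canonical pieces $P$, uses boundedness (hence \emph{fus}) to get a finite complete set of $\ruleset$-rewritings $Q_P$ of the query $\exists\vect{y}\,P(\vect{c_P},\vect{y})$, and sets $\ruleset'=\{q'_P(\vect{x})\rightarrow\exists\vect{y}\,P(\vect{x},\vect{y})\}$. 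The rewriting detour is not cosmetic: it is exactly what absorbs the body atoms of intermediate rule applications that are \emph{not} matched in the instance.

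This is where your argument has a genuine gap. You claim every piece $P$ of $\chase{\infty}{\instance}{\ruleset}$ is produced by a pieceful sub-derivation with at most $M$ applications whose premises lie in $\instance$ (at most $bM$ instance atoms), ``since each application contributing to $P$ adds the head of one rule and $|P|\le M$''. But piecefulness only constrains the \emph{frontier} of each application; non-frontier body variables may be mapped to nulls of \emph{other} pieces, so an application contributing to $P$ may need body atoms that are neither in $\instance$ nor in $P$, and those atoms must themselves be derived inside $\Delta$. Concretely, take $R_1: A(x)\rightarrow\exists z\,q(z)$ and $R_2: B(x)\wedge q(u)\rightarrow\exists w\,r(x,w)$: this set is pieceful and bounded, every piece has size $1$ (so $M=1$), yet producing the piece $\{r(a,\nu)\}$ from $\instance=\{A(c),B(a)\}$ needs two applications and the intermediate atom $q(\nu)\notin\instance$. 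With your enumeration cut off at $m\le M$, the rule set $\ruleset'$ you build contains essentially only $\ruleset$ itself, and $\chase{1}{\instance}{\ruleset'}$ contains no $r$-atom, so even Condition~1 of parallelisability fails. The fix is available but must be made explicit: you introduce the chase bound $k$ and never use it, whereas it is precisely what you need --- unfold, for each atom of $P$, its trigger tree (depth at most $k$, branching at most $b$) to bound the length of a derivation generating $P$ \emph{together with all auxiliary body atoms} by a function of $k$, $b$, $M$ and $|\ruleset|$, independent of $\instance$, and enumerate derivations up to that length. With that correction (and the bookkeeping you already anticipate for the compatibility of semi-oblivious null naming, whose outline for injectivity is essentially sound), your construction goes through.
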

% \mlm{We have to decide what to do here: f.i. we can summarize the proof  (we stop just after: 'we claim that R' is a parallelisation of R') -just to show that it does not need rule composition- say that the detailed version of the proof is in the attached pdf and announce that we will give a detailed proof based on rule composition.}
\begin{proof}(Sketch)
 As $\ruleset$ is pieceful and bounded, there is a finite set of pieces $\mathcal{P}$ such that for any instance $\instance$, any piece of $\chase{}{\instance}{\ruleset}$ is isomorphic to a piece of $\mathcal{P}$.  
 By isomorphism from a piece $P_1$ to a piece $P_2$, we mean a bijection $b$ from $\Term{P_1}$ to $\Term{P_2}$ such that, for  all $x \in \Term{P_1}$, $b(x)$ is a null iff $x$ is a null, and $b(P_1) = P_2$. 
 
 For any piece $P \in \mathcal{P}$, let $(\vect{c_P})$ be a tuple obtained by a total ordering on $\Const{P}$, and  consider the query $q_P(\vect{c_P}) = \exists \vect{y}\ P(\vect{c_P},\vect{y})$, 
 with $\vect{y}$ denoting the variables from $P$. Let $Q_P$ be a finite complete set $\ruleset$-rewritings of $q_P$ (note that we need here the general definition of piece-unifiers that deals with constants). There is such set for any $P$, as $\ruleset$ is bounded, hence \emph{fus}.
   We define $\ruleset'$ as the set of rules of the shape
 \(q'_P(\vect{x}) \rightarrow \exists\vect{y} P(\vect{x},\vect{y}),\)
where  $\vect{x}$ is a tuple of variables in bijection with $\vect{c_P}$, for $P\in\mathcal{P}$ and $q'_P\in Q_P$. 
%Note that $q'_P$ indeed contains all constants from $\vect{c_P}$
%Note that every $q'_P\in Q_P$ is of the form $q'_P(\vect{c_P})$ 
% since constants do not appear in rules of $\ruleset$.  \mt{Not clear to me what we want to say.}\mlm{I agree, let's just remove this sentence}
 %We claim that 
%  $\ruleset'$ is a parallelisation of $\ruleset$. (see the complementary document).
\end{proof}

From Propositions \ref{prop-parallelisable-imply-bounded}, \ref{prop-parallelisable-imply-pieceful} and \ref{prop-pieceful-bounded-imply-parallelisable}, we finally obtain the following characterization of parallelisability: 

\begin{theorem}\label{th-parallelisable}
A rule set is parallelisable if and only if it is both bounded and pieceful.
\end{theorem}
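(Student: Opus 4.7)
The theorem is a direct consolidation of the three preceding propositions, so my plan is essentially a pointer argument rather than a fresh derivation. The strategy is to prove each direction of the biconditional by citing the relevant propositions already established in this section.

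For the forward direction, suppose $\ruleset$ is parallelisable. I would first invoke Proposition~\ref{prop-parallelisable-imply-bounded} to conclude that $\ruleset$ is bounded. Then I would invoke Proposition~\ref{prop-parallelisable-imply-pieceful} to conclude that $\ruleset$ is pieceful. Combining both yields that $\ruleset$ is bounded and pieceful, as required.

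For the backward direction, suppose $\ruleset$ is both bounded and pieceful. Here the work is done by Proposition~\ref{prop-pieceful-bounded-imply-parallelisable}, which directly asserts that such a rule set is parallelisable. So the proof is just a one-line citation.

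There is no real obstacle here: the nontrivial content lies in the three lemmas already established (in particular, in the construction via complete sets of rewritings of piece-queries used for Proposition~\ref{prop-pieceful-bounded-imply-parallelisable}, and in the blow-up construction underlying Proposition~\ref{prop-not-sgbts-unbounded-arity-of-nulls} that powers Proposition~\ref{prop-parallelisable-imply-pieceful}). The theorem statement itself only packages these three results into a single equivalence, so the proof I would write is essentially: ``The only-if direction follows from Propositions~\ref{prop-parallelisable-imply-bounded} and \ref{prop-parallelisable-imply-pieceful}; the if direction is Proposition~\ref{prop-pieceful-bounded-imply-parallelisable}.''
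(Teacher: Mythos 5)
Your proposal is correct and matches the paper exactly: the theorem is stated as an immediate consequence of Propositions~\ref{prop-parallelisable-imply-bounded}, \ref{prop-parallelisable-imply-pieceful} and \ref{prop-pieceful-bounded-imply-parallelisable}, with the forward direction given by the first two and the backward direction by the third. No further argument is needed.
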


%%% Local Variables:
%%% TeX-master: "krcr-main"
%%% End:

%\section{Complexity}   %% on verra
%%%

\section{Parallelisability and Rule Composition}\label{sec-composition}

%%%

%!TEX root = krcr-main.tex
Early work on datalog has shown that a datalog rule set $\ruleset$ is `parallelisable' (according to our definition)
%admits an equivalent non-recursive ruleset (i.e., is parallelisable)
%\mt{This i.e. is disturbing to me: for me Example $1$ would be non recursive, but not parallelisable. Granted, there are existential rules, and for Datalog non-recursive implies parallelisable. Still...}
if and only if it is bounded 
(see, \emph{e.g.}, \cite{DBLP:journals/jacm/GaifmanMSV93}). Moreover, such set, say $\ruleset^{\star}$, can be computed by an operation called \emph{unfolding}: given rules $R_1: B_1 \rightarrow H_1$ and $R_2: B_2 \rightarrow H_2$, and a (most general) unifier $u$ of an atom $A$ in $B_2$ with the atom in $H_1$, the unfolding of $R_2$ by $R_1$ is the rule 
$u(B_1) \cup  u(B_2 \setminus \{A\}) \rightarrow u(H_2)$; starting from $\ruleset$, on can build $\ruleset^{\star}$ by repeatedly unfolding a rule from $\ruleset^{\star}$ by a rule from $\ruleset$, until a fixpoint is reached. This is illustrated by the next example. 

\begin{example}[Datalog Unfolding] \label{ex-datalog}
Let $\mathcal R = \{R_1, R_2, R_3\}$ with:\\
$R_1 : A(x) \rightarrow B(x)$
\\
$R_2 : C(x)  \rightarrow D(x)$
\\
$R_3 : B(x) \wedge D(x)  \rightarrow G(x)$
\\
%Starting from $\mathcal R^{\star} = \mathcal R$, we repeatedly unfold a rule $R_i \in \mathcal R^{\star}$  with  a rule $R_j $ from $\ruleset$, 
%which we denote by $R_i \circ R_j$:
Denoting $R_i \circ R_j$ the unfolding of $R_i$ by $R_j$, we obtain: 
\\
$R_3 \circ R_1 : A(x) \wedge D(x) \rightarrow G(x)$ 
\\
$R_3 \circ R_2: C(x)\wedge B(x) \rightarrow G(x)$ 
\\
$(R_3 \circ R_1) \circ R_2 : A(x) \wedge C(x) \rightarrow G(x)$
\\
$(R_3 \circ R_2) \circ R_1 = (R_3 \circ R_1) \circ R_2$. 
\\
Finally,  $\ruleset^{\star} = \ruleset \cup \{R_3 \circ R_1, R_3 \circ R_2, (R_3 \circ R_1) \circ R_2 \}$. 
Here $\ruleset^{\star} $ is finite, as $\ruleset$ is bounded. 

\end{example}

 In the following, we will consider composition of existential rules, which generalizes this notion of unfolding.

\subsection{Rule Composition}\label{sec-composition}
Composition of existential rules has been exploited as a means of tracing `chained' sequences of rule applications (\emph{e.g.}, \cite{DBLP:conf/ecai/BagetGMR14,DBLP:conf/aaai/WangWZ18}).  It is naturally based on the notions of piece-unifier and rewriting.  The next definition of rule composition furthermore takes pieces into account.

\begin{definition}[Rule Composition]
\label{def:existential-composition}
\mlmm{
Given rules $R_1: B_1 \rightarrow H_1$ and $R_2: B_2 \rightarrow H_2$ on disjoint sets of variables and $\mu = (B'_2, H'_1, u)$ a piece-unifier of $B_2$ with $R_1$,  the \emph{(existential) composition} of $R_2$ with $R_1$ w.r.t. $\mu$ is the following existential rule, denoted by $R_2 \circ_\mu R_1$, or simply $R_2 \circ  R_1$:
\begin{enumerate}
\item If $u(\Fr{R_2}) \cap \exist{R_1} = \emptyset$:
       $$R_2 \circ_\mu R_1 = u(B_1) \cup u(B_2 \setminus B'_2) \rightarrow u(H_2)$$
\item Otherwise:
$$R_2 \circ_\mu R_1 = u(B_1) \cup u(B_2 \setminus B'_2) \rightarrow u(H_1) \cup u(H_2)$$
\end{enumerate}
}

%\begin{definition}[Rule Composition]
%\label{def:existential-composition}
%Given rules $R_1: B_1 \rightarrow H_1$ and $R_2: B_2 \rightarrow H_2$ and $\mu = (B'_2, H'_1, u)$ a piece-unifier of $B_2$ with $R_1$,  the \emph{(existential) composition} of $R_2$ with $R_1$ w.r.t. $\mu$ is the following existential rule, denoted by $R_2 \circ_\mu R_1$, or simply $R_2 \circ  R_1$:
%\begin{enumerate}
%\item If $u(\Fr{R_2}) \cap \exist{R_1} = \emptyset$:
%       %If $u(\Fr{B'_2}) \subseteq \Fr{R_1}$, where $\Fr{B'_2} = \Fr{R_2} \cap \Var{B'_2}$:
%       $$R_2 \circ_\mu R_1 = u^{\mathrm{safe}}(B_1) \cup u(B_2 \setminus B'_2) \rightarrow u(H_2)$$
%\item Otherwise:
%$$R_2 \circ_\mu R_1 = u^{\mathrm{safe}}(B_1) \cup u(B_2 \setminus B'_2) \rightarrow u(H_1) \cup u(H_2)$$
%\end{enumerate}
%where $u^{\mathrm{safe}}(B_1)$ extends $u$ by a bijective renaming of $\Var{B_1} \setminus \Fr{B_1}$ with fresh variables. 
%\mlm{\\
%En fait, $u^{safe}(B_1) \cup u(B'_2) = \beta(B_2,R,\mu)$, est-ce qu'on le fait ressortir? Ca fait le lien avec la réécriture. Il faut avoir dit quelque part comment on passe de body à query.} 
%\mt{Beware! In case 2 the frontier contains also frontier variables coming from $R_1$, so answer variables are changing! This makes things messy, and this is why me missed the behavior of the prime example at the beginning.}\mlm{I don't understand the point: rewriting is defined exactly like this in the preliminaries, just on atoms. Once we have the piece-unifier, there is nothing else to care about. }
\end{definition}

\mlmm{Note that $u(B_1) \cup u(B_2 \setminus B'_2)$ is exactly the rewriting of $B_2$ w.r.t. $\mu$.}
%Note that $u^{\mathrm{safe}}(B_1) \cup u(B_2 \setminus B'_2)$ is exactly the rewriting of $B_2$ w.r.t. $\mu$.

The first case in the definition is when no frontier variable from $R_2$ is unified with an existential variable from $R_1$, \emph{i.e.}, $u(\Fr{B'_2}) \subseteq \Fr{R_1}$. 
% the frontier variables occurring in $B'_2$ are all unified with frontier variables from $R_1$ . 
%\mb{or equivalently, when no existential variable of $R_1$ are unified with a frontier variable of $R_2$. Je préfère voir les choses comme ça pour comprendre facilement que dans le cas de Datalog, on reste dans le cas 1}. 
Then, defining $R_2 \circ_\mu R_1$ as in Point 2 would lead to a rule with a two-piece head (resp., $u(H_1)$ and $u(H_2)$), which can be decomposed into two single-piece-head rules. Moreover, the rule $u(B_1) \cup u(B_2 \setminus B'_2) \rightarrow u(H_1)$ is useless
% of no interest 
because
%it is redundant w.r.t. $R_1$ (in the sense of Section \ref{sec-prelim}: 
every application of this rule can be obtained with an application of $R_1$. %\mlm{Check if we finally introduce redundancy in the preliminaries. If not, reformulate here.} 
In the second case, since at least one frontier variable from $B'_2$ is unified with an existential variable from $R_1$, $u(H_1) \cup u(H_2)$ forms a single piece. Therefore, restricting the rule head to $u(H_2)$ would result in a loss of information. 
Note that in both cases, the obtained rule $R_2 \circ R_1$ has a single piece head.
 
%\mlm{New example, copied from example 8, now 9  (check!):}\mb{checked}

\begin{example} \label{ex-composition}
Let $\ruleset$ contain three rules:\\
$R_1: A(x) \rightarrow \exists z~p(x,z)$\\
$R_2: p(x,z) \rightarrow B(z)$\\
$R_3: C(x) \land B(y) \rightarrow r(x,y)$\\
The composed rule $R_3 \circ R_2 = p(x',z) \land C(x) \rightarrow r(x,z)$ illustrates Point 1. Defining $R_3 \circ R_2$ as in Point 2 would lead to the following rule with a two-piece head:\\
$p(x',z) \land C(x) \rightarrow B(z) \land r(x,z)$
\\We can see that  $p(x',z) \land C(x) \rightarrow B(z)$ is useless w.r.t. $R_2$. \\
%`contained' in \mb{or subsumed by}  $R_2$. 
% MLM: en fait, c'est plus que de la subsomption, c'est que toute application de cette règle produit le même résultat qu'une application de R2. 
$R_2 \circ R_1 = A(x) \rightarrow \exists z~p(x,z) \land B(z)$ illustrates Point 2.
\end{example}

% \mlm{Je propose de transformer cette remarque en une petite prop:  $R_2 \circ  R_1$ is single-piece iff $R_1$ and $R_2$ are single-piece.} Ah non elle est fausse :)
%
%Then, the idea is to compose rules together until fixpoint (up to bijective variable renaming). 
The set $\ruleset^\star$ includes the original set $\ruleset$ as well as all rules obtained by composition. 

\begin{definition}($\mathcal R^{\star}$)
The set of (existentially-) composed rules associated with $\ruleset$, denoted by $\mathcal R^{\star}$, is the possibly infinite set inductively defined as follows: 
\\(base) $\ruleset \subseteq \ruleset^{\star}$, 
\\ (induction) if $R_i, R_j \in \ruleset^{\star}$ and there is a piece-unifier $\mu$ of $\body{R_i}$ with $R_j$, then 
$R_i \circ_\mu R_j \in \mathcal R^{\star}$.
\end{definition}

%In the above definition, we consider that two isomorphic rules are equal. 

%It holds that 
$\ruleset^\star$ is sound and complete in the sense that  entailment of Boolean CQs is preserved (with $\ruleset^\star$ applied in a single breadth-first step). Formally: for any rule set $\ruleset$, instance $\instance$ and Boolean CQ $q$, holds $I, \ruleset \models q$ if and only if holds $\chase{1}{\instance}{\ruleset^\star} \models q$.
%$$\bigcup_{R \in \mathcal R^{\star}, h:\body{R}\rightarrow I} \alpha(I,R,h) \models q$$
%$$ I \cup \bigcup_{R \in \mathcal R^{\star}, h:\body{R}\rightarrow I} h^{safe}(\head{R}) \models q$$
%
%\mlm{En fait, c'est $chase_1(\instance, \ruleset^\star)$. Pfff...}
Soundness relies on the fact that all the rules in $\ruleset^\star$ are entailed by $\ruleset$ and completeness 
%can be obtained from the results of
follows from \cite{DBLP:conf/aaai/WangWZ18}, Prop.~2. 

The next proposition yields a more specific completeness result: if $\ruleset$ is a pieceful rule set, then for any instance $\instance$, each piece of $chase_\infty(\instance,\ruleset)$  can be obtained by applying a rule from $\ruleset^\star$ to $\instance$.

\begin{proposition}
\label{prop-pieceful-derivation-star}
Let $\instance$ be an instance and $\ruleset$ be a set of rules. For any pieceful derivation of length $i$ resulting in $\instance_i$, for any piece $P$ in $\instance_i$, either $P \subseteq \instance$ or there exist a rule $R^* \in \ruleset^{\star}$ and a homomorphism $\pi$ from $\body{R^*}$ to $\instance$ such that $P$ maps to $\pi^{\mathrm{safe}}(\head{R^*})$ by an injective homomorphism. 
\end{proposition}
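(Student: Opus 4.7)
The plan is to prove the proposition by induction on the length $i$ of the derivation. The base case $i=0$ is immediate: $\instance_0 = \instance$ has no nulls, so every piece $P$ is trivially contained in $\instance$. For the inductive step, set $A_i = \pi_i^{\mathrm{safe}}(\head{R_i})$ so that $\instance_i = \instance_{i-1} \cup A_i$. We may assume $A_i \not\subseteq \instance_{i-1}$, since otherwise $\instance_i = \instance_{i-1}$ and the result follows directly from the induction hypothesis. By the semi-oblivious naming, the nulls of $A_i$ coming from $\exist{R_i}$ are fresh: they do not occur in $\instance_{i-1}$.

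Fix a piece $P$ of $\instance_i$. If $P$ does not intersect $A_i$, then $P$ is already a piece of $\instance_{i-1}$, since its closure under null-sharing is unchanged by adding $A_i$, and the induction hypothesis directly supplies the desired $R^*$. Otherwise, piecefulness of the $i$-th step leaves two subcases. If $\pi_i(\Fr{R_i}) \subseteq \Term{\instance}$ consists of constants only, then $A_i$ shares no null with $\instance_{i-1}$, so $P$ lies entirely inside $A_i$. In the remaining subcase, $\pi_i(\Fr{R_i}) \subseteq \Term{A_j}$ for some $j<i$, and the nulls of $\pi_i(\Fr{R_i})$ belong to a single piece $P'$ of $\instance_{i-1}$ (the one containing $A_j$'s nulls), giving $P = P' \cup A_i$ with the two sets glued together exactly along those shared nulls.

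In both non-trivial subcases, the rule $R^* \in \ruleset^\star$ is built by composing $R_i$ with rules obtained from the induction hypothesis. For the subcase $P = P' \cup A_i$, the hypothesis applied to $P'$ yields a rule $R_{P'} \in \ruleset^\star$, a homomorphism $\pi_{P'}: \body{R_{P'}} \to \instance$, and an injective homomorphism $h: P' \to \pi_{P'}^{\mathrm{safe}}(\head{R_{P'}})$. A piece-unifier $\mu = (B', H', u)$ of $\body{R_i}$ with $R_{P'}$ is built by choosing $B' \subseteq \body{R_i}$ to be the body atoms whose $\pi_i$-image lies in $P'$, and $u$ so that a variable of $B'$ sent by $\pi_i$ to a null of $P'$ is substituted by the existential of $R_{P'}$ that generated that null (read off through $h$ and $\pi_{P'}^{\mathrm{safe}}$), while a variable sent to a constant of $P'$ is substituted by a matching frontier variable of $R_{P'}$. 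Because at least one frontier variable of $R_i$ is mapped to a null of $P'$, the composition $R_i \circ_\mu R_{P'}$ falls under Case~2 of Definition~\ref{def:existential-composition}, so its head retains both $u(\head{R_{P'}})$ and $u(\head{R_i})$, covering all atoms of $P$. The first subcase is analogous but invokes Case~1 of the composition when appropriate. If body atoms of $R_i$ still map outside $\instance$ after this first composition (because they sit in further pieces of $\instance_{i-1}$), we iterate the construction with the rules supplied by the induction hypothesis for those pieces; this terminates after finitely many compositions and yields the desired $R^* \in \ruleset^\star$ whose body maps to $\instance$.

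The main obstacle is to verify that each piece-unifier built along the way actually satisfies the three conditions of the piece-unifier definition, most notably the separation condition~(2): every variable of $\body{R_i}$ shared between the unified and non-unified parts must be substituted by a frontier variable of the rule being composed with. This rests on the observation that such a shared variable must have a constant image under $\pi_i$, since a null image would force atoms outside $P'$ to lie inside $P'$, contradicting the partitioning of $\instance_{i-1}$ into pieces; the constant then coincides with $\pi_{P'}$ applied to some frontier variable of $R_{P'}$, which is the substitute we select. A second delicate point is the injectivity of the final homomorphism $P \to \pi^{\mathrm{safe}}(\head{R^*})$; this follows from the fact that every null of $P$ traces back to a unique existential variable of $R^*$ (fresh existentials from $R_i$ stay fresh under $u$, while existentials from $R_{P'}$ are preserved through the renaming $u$ combined with the injective $h$).
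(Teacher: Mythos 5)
Your proof follows essentially the same strategy as the paper's: an outer induction on the derivation length combined with an iterated composition of $R_i$ with the rules supplied by the induction hypothesis for the pieces of $\instance_{i-1}$ reached by $\body{R_i}$, treating first the piece containing the nulls hit by $\Fr{R_i}$ (Case~2 of composition) and then the remaining pieces (Case~1), and checking exactly the two delicate points the paper checks, namely the separating-variable condition and injectivity. The only detail you gloss over is that the unifier must also merge frontier variables of $R_{P'}$ that $\pi_{P'}$ sends to the same constant (the paper's $s_\pi$ device, needed for condition~(3) of piece-unifiers); this is a technicality, not a flaw in the approach.
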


\begin{proof} (Sketch)
  We prove the result by induction on the length $i$ of the derivation.  
 For $i = 1$, if $P \not \subseteq \instance$, then $P$ has been generated by an application of a rule from $R \in \ruleset \subseteq \ruleset^{\star}$. 
  Assuming that the result holds for any piece $P$ of $\instance_{i-1}$ with 
  \mlmm{$i > 1$}, 
  %$i \geq 1$, 
  we show it also holds for any piece $P$ of $\instance_i$. Let \mlmm{$I_0,(R_1,\pi_1,I_1),\ldots,(R_i,\pi_i,I_i)$} be a derivation of length $i$. The body of $R_i$ is mapped by $\pi_i$ to $k$ pieces of $I_{i-1}$, where $k \leq |\Var{\body{R_i}}|$. Let $P$ be the piece created or completed by the application of $R_i$ by $\pi_i$. As the derivation is pieceful, $P$ is either a piece of $I_{i-1}$ to which 
  $\pi_i^{\mathrm{safe}}(\head{R_i})$
  has been added, or a new piece. We build by induction on $k$ a rule $R_P$ that maps by $\pi_P$ to $\instance$ and such that $P$ injectively maps to the result of the application of $R_P$ by $\pi_P$.
 \end{proof}

We conjecture that the previous result actually holds without the pieceful restriction, but its proof would be more intricate.
%, hence we consider this simpler case for clarity. \mlm{enelver, car ça donne l'impression qu'on aurait ou mettre un résultat pas vraiment prouvé. 
Thanks to this result, we can refine Proposition \ref{prop-pieceful-bounded-imply-parallelisable}: 

\begin{corollary}\label{corollary-pieceful-bounded-imply-parallelisable}
 If $\ruleset$ is  pieceful and bounded then it is parallelisable by a (finite) subset of $\ruleset^{\star}$.	 
\end{corollary}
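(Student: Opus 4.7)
The plan is to combine the Corollary following Proposition~\ref{prop-sgbts-chase-k-pieces}, which bounds the size of pieces arising in the chase, with Proposition~\ref{prop-pieceful-derivation-star}, which shows that any such piece can be produced in one step by some rule of $\ruleset^\star$. From these two ingredients one extracts a finite $\ruleset' \subseteq \ruleset^\star$ that parallelises $\ruleset$.

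First, let $k$ be a chase bound for $\ruleset$ and let $N$ be the uniform bound on piece size provided by the Corollary. Inspecting the inductive construction in the proof sketch of Proposition~\ref{prop-pieceful-derivation-star}, the rule $R^*_P \in \ruleset^\star$ witnessing a piece $P$ is obtained by a composition tree of depth at most $|P| \le N$ over the finite set $\ruleset$. Since each individual composition step involves only finitely many piece-unifier choices (up to variable renaming), the collection of rules of $\ruleset^\star$ obtainable in this way is finite up to renaming; take $\ruleset'$ to be this finite collection.

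It remains to verify the two parallelisability conditions for $\ruleset'$. Condition~(2) is immediate: every rule of $\ruleset' \subseteq \ruleset^\star$ is a logical consequence of $\ruleset$, so $\chase{1}{I}{\ruleset'}$ maps homomorphically to $\chase{\infty}{I}{\ruleset}$. For condition~(1), I would decompose $\chase{\infty}{I}{\ruleset}$ into its pieces w.r.t. nulls and, for each piece $P \not\subseteq I$, invoke Proposition~\ref{prop-pieceful-derivation-star} to obtain an injective homomorphism from $P$ into $\pi^{\mathrm{safe}}(\head{R^*_P})$ for some trigger $(R^*_P, \pi)$ with $R^*_P \in \ruleset'$. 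Because distinct pieces of $\chase{\infty}{I}{\ruleset}$ share only constants, and distinct triggers of the single breadth-first step $\chase{1}{I}{\ruleset'}$ introduce disjoint sets of nulls, these piecewise injections assemble into a global injective homomorphism from $\chase{\infty}{I}{\ruleset}$ to $\chase{1}{I}{\ruleset'}$.

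I expect the main obstacle to lie in the uniformity argument of the second paragraph: while Proposition~\ref{prop-pieceful-derivation-star} only provides a witness $R^*_P$ per instance-piece pair, guaranteeing that a single finite subset of $\ruleset^\star$ serves across all instances requires extracting from its proof a bound on the composition depth in terms of the piece size, and then combining this with the finiteness of piece-unifier choices at each level.
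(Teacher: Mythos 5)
Your proposal follows essentially the same route as the paper's own (very terse) proof: boundedness together with piecefulness gives an instance-independent bound on piece size and on the length of the derivations generating each piece, and Proposition~\ref{prop-pieceful-derivation-star} then yields a finite (up to renaming) subset of $\ruleset^{\star}$ whose single breadth-first step injectively covers all pieces of $\chase{\infty}{\instance}{\ruleset}$ and is sound because composed rules are entailed by $\ruleset$. The only nuance is that the composition depth is not bounded by $|P|\leq N$ alone but by a function of the chase bound and the rule body/head sizes as well (the composed rule must also absorb the other bounded-size pieces touched by rule bodies, recursively along the derivation); since all these parameters are instance-independent, this does not affect the finiteness conclusion, and it is exactly the uniformity point you correctly flag as the step needing care.
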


\begin{proof}
  Since $\ruleset$ is bounded, the maximum size of a piece is bounded and each piece can be generated by a derivation of length bounded by an integer $n$, where $n$ is independent of the instance. %This shows that $\ruleset_n$ as defined above \mlm{where?} is a parallelisation of $\ruleset$.
\end{proof}

\medskip

Since CQ answering with general existential rules is not decidable, $\ruleset^\star$ can be infinite. The next example shows that $\ruleset^\star$ can be infinite even for $\ruleset$ a \emph{bounded} set of rules, which may seem surprising.  
%\mb{Je sais que l'on n'a pas encore la réponse, mais est-ce qu'on ne devrait pas dire un mot sur la finitude de $\ruleset^\star$ dans le cas bounded et pieceful ?}

\begin{example}We consider again the prime example, where $\ruleset = \{R_1, R_2\}$ is bounded. 
Let us build $\mathcal R^{\star}$: 
\\
  $R_1: A(x) \rightarrow \exists z~p(x,z)$
 \\ 
 $R_2 : p(x,z) \wedge B(y) \rightarrow r(z,y)$
\\
$R_2 \circ R_1 : A(x) \wedge B(y) \rightarrow \exists z~p(x,z) \land r(z,y)$
\\
$R_2 \circ (R_2 \circ R_1): 
A(x) \wedge B(y) \wedge B(y_1) \rightarrow$ \\ 
$~~~~~~~~~~~~~~~~~~~~~~~~~~~~~~~~~~~~~~~~~~~~~~~~~~~~~~~~~~~
\exists z~p(x,z) \land r(z,y) \land r(z, y_1)$
\\
\emph{etc.}
At each step, one obtains a new rule by the composition $R_2 \circ R^*$, where $R^*$ is the rule created at the preceding step:  \\
$A(x) \land B(y) \land B(y_1) \ldots B(y_i) \rightarrow $\\
$~~~~~~~~~~~~~~~~~~~~~~~~~~~~~~~~~~~~~~~~
\exists z~p(x,z) \land r(z,y) \land r(z, y_1) \ldots \land r(z,y_i)$
\\
Not only $\mathcal R^{\star}$ is infinite, but \mlmm{no finite subset of it is complete.}
%each rule is necessary to achieve completeness, hence no finite subset of $\mathcal R^{\star}$ is complete.

\end{example}

Moreover, the previous example shows that rule compositions of the form $R \circ R^*$, with $R \in \ruleset$ and $R^* \in \mathcal R^{\star}$  are required to achieve completeness, while rule compositions of the form $R^* \circ R$ are sufficient in the datalog fragment. 
%\mb{pareil, un mot sur le pieceful fragment ?}

\subsection{Existential Stability}
In Sect. \ref{sec-charac}, we have shown that a rule set is parallelisable if and only if it is both bounded and pieceful, with this last notion being defined by the behavior of rules during the chase. The next question we study is whether pieceful rule sets can be characterized by their behavior during rule composition. We first introduce the `existential stability' property (the reason for this name will be explained in Sect.~\ref{sec-succint-composition}).

\begin{definition}[Existential stability] Given rules  $R_1$ and $R_2$, we say that a piece-unifier $\mu = (B'_2, H'_1, u)$ of $\body{R_2}$ with $R_1$ satisfies the \emph{(existential) stability} property if the following holds: \\
$\bullet$ either $ u(\Fr{R_2}) \cap  \exist{R_1} =\emptyset$, 
%\mb{C'est le même cas que le cas 1 de la definition  dans la Defintion~\ref{def:existential-composition}. Il faudrait choisir entre une des deux équations pour que l'on reconnaisse facilement l'équivalence}
\\ 
$\bullet$ or $\Fr{R_2} \subseteq \Var{B'_2}$.

%$\bullet$  $u(\Fr{R_2}) \subseteq u(\Var{B'_2})$ \mb{ou $\Fr{R_2} \subseteq \Var{B'_2}$ de manière équivalente et peut être plus claire. Si on accepte cette modification , il faut aussi changer la composition compacte}.\\
This notion is extended to a rule set: $\ruleset$ satisfies the stability property if, for any rules $R_1$ and $R_2$ in $\ruleset$, any piece-unifier of $R_2$ with $R_1$ satisfies the stability property. 
\end{definition}

Informally, the stability property says that when a frontier variable from $R_2$ is unified with an existential variable from $R_1$ then all the frontier variables from $R_2$ are unified. By the next example, we point out that the stability property of a rule set may not be preserved when composed rules are added.

%the stability property may not be preserved by rule composition. \mlm{unclear}

\begin{example} Let $\ruleset$ from Example \ref{ex-composition}. It can be checked that it has the stability property. Now, consider $R_2 \circ R_1$ and $R_3$:
\\
$R_2 \circ R_1: A(x) \rightarrow \exists z~p(x,z) \land B(z)$\\
%Let $\ruleset$ contain three rules:\\
%$R_1: A(x) \rightarrow \exists z~p(x,z)$\\
%$R_2: p(x,z) \rightarrow q(z)$\\
$R_3: C(x) \land B(y) \rightarrow r(x,y)$\\
%It can be checked that $\ruleset$ has the stability property. \\
%By composition of rules in $\ruleset$, one obtains:\\
 %$R_2 \circ R_1 = A(x) \rightarrow \exists z~p(x,z) \land q(z)$\\
%(as well as $R_3 \circ R_2 = p(x,z), r(x') \rightarrow \exists z~q(z), s(x',z)$)\\
%Then, the composition $R_3 \circ (R_2 \circ R_1)$ is based on 
Then, $R_3 \circ (R_2 \circ R_1)$ involves 
a piece-unifier that does not satisfy the stability property: $\Fr{R_3} = \{x,y\}$; $y$ is unified with $z \in \exist{R_2 \circ R_1}$ but $x$ is not unified, hence $\ruleset \cup \{R_2 \circ R_1\}$ does not have the stability property.
\end{example}

We say that a rule set $\ruleset$ is \emph{stable at the infinite} if $\mathcal{R}^{\star}$ satisfies the stability property.
% (1) it satisfies the stability property and (2)  for any composed rule $R_1 \circ R_2$ with $R_1, R_2$ in $\ruleset$, $\ruleset \cup \{R_1 \circ R_2\}$ is stable at the infinite.
% \mt{Was: satisfies the stability property.}\mlm{ok}
Next, we show that pieceful rule sets are exactly those rule sets stable at the infinite. 

\begin{proposition}
\label{prop-pieceful-local-stability}
Any pieceful rule set satisfies the stability property.
\end{proposition}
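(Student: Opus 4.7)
The plan is to prove the contrapositive: if some piece-unifier between two rules of $\ruleset$ violates stability, then $\ruleset$ admits a non-pieceful derivation. So I would fix rules $R_1: B_1 \rightarrow H_1$ and $R_2: B_2 \rightarrow H_2$ of $\ruleset$ and a piece-unifier $\mu = (B'_2, H'_1, u)$ of $B_2$ with $R_1$ that violates stability. This extracts two witnesses: a variable $y \in \Fr{R_2}$ with $u(y) \in \exist{R_1}$, which forces $y \in \Var{B'_2}$ since the domain of $u$ is $\Fr{R_1} \cup \Var{B'_2}$ and the two rules have disjoint variables; and a variable $y' \in \Fr{R_2}$ with $y' \notin \Var{B'_2}$, hence $y' \in \Var{B_2 \setminus B'_2}$ and $y' \neq y$.

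Next, I would construct an instance $I$ and a length-two derivation $(I), (R_1,\pi_1), (R_2,\pi_2)$ whose second application fails piecefulness. Let $f_1$ be a freezing of $\Var{B_1}$ (assume w.l.o.g.\ that $u$ is the identity on $\Fr{R_1}$). Define a freezing $g$ of $\Var{B_2 \setminus B'_2}$ by $g(v) = f_1(u(v))$ when $v \in \sep{B'_2}{B_2}$ --- this is well-defined because piece-unifier condition~2 gives $u(v) \in \Fr{R_1}$ --- and $g(v)$ a fresh constant otherwise. Take $I = f_1(B_1) \cup g(B_2 \setminus B'_2)$, apply $R_1$ via $\pi_1 := f_1$ to produce $A_1 = \pi_1^{\mathrm{safe}}(H_1)$, and define $\pi_2$ on $\Var{B_2}$ by $\pi_2(v) = \pi_1^{\mathrm{safe}}(u(v))$ if $v \in \Var{B'_2}$ and $\pi_2(v) = g(v)$ otherwise. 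The two cases agree on separating variables by construction of $g$; the piece-unifier identity $u(B'_2) = u(H'_1) = H'_1 \subseteq H_1$ yields $\pi_2(B'_2) \subseteq A_1$; and $\pi_2(B_2 \setminus B'_2) \subseteq I$ by design, so the derivation is legal.

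Finally, I would verify non-piecefulness of the second step. Since $A_1$ is the only previously produced atom set, it suffices to show $\pi_2(\Fr{R_2})$ is contained in neither $\Term{I}$ nor $\Term{A_1}$. The witness $y$ gives $\pi_2(y) = \pi_1^{\mathrm{safe}}(u(y))$, a null (since $u(y) \in \exist{R_1}$), which is absent from $\Term{I}$; the witness $y'$ gives $\pi_2(y') = g(y')$, a constant chosen fresh and therefore distinct from every element of $f_1(\Fr{R_1})$ and from every null, hence absent from $\Term{A_1}$. This contradicts piecefulness of $\ruleset$. The main obstacle is setting up the freezing $g$ so that (i) its values on separating variables agree with what $\pi_1^{\mathrm{safe}} \circ u$ dictates on $B'_2$, which is exactly what piece-unifier condition~2 buys, and (ii) the image of $y'$ stays outside $\Term{A_1}$, which is secured by the freshness in the ``otherwise'' branch; the remaining checks are straightforward bookkeeping.
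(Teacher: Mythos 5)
Your overall route is the paper's: from a stability-violating piece-unifier $\mu=(B'_2,H'_1,u)$ of $B_2$ with $R_1$, build a two-step derivation (one application of $R_1$, then one of $R_2$ guided by $u$) on a frozen instance obtained from the rewriting of $B_2$, and use the two witnesses ($y\in\Fr{R_2}$ unified with an existential variable of $R_1$, and $y'\in\Fr{R_2}$ outside $\Var{B'_2}$) to show that the second trigger maps $\Fr{R_2}$ neither into the terms of the initial instance nor into the terms of the atoms produced by $R_1$. That part of the bookkeeping (agreement of $g$ with $\pi_1^{\mathrm{safe}}\circ u$ on separating variables, freshness of $g(y')$, nullity of $\pi_2(y)$) is correct.

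There is, however, one step that fails as stated: ``assume w.l.o.g.\ that $u$ is the identity on $\Fr{R_1}$''. Condition~1 of piece-unifiers only requires $u(\Fr{R_1})\subseteq\Fr{R_1}$, and a unifier may be \emph{forced} to identify distinct frontier variables of $R_1$: for instance, unifying $B'_2=\{p(w,w,v)\}$ with $H_1=\{p(x_1,x_2,z)\}$ (with $x_1,x_2\in\Fr{R_1}$, $z$ existential) forces $u(x_1)=u(x_2)$, and such a unifier can still violate stability (add an atom $B(t)$ to $B_2$ with $t\in\Fr{R_2}$); no unifier of these rules is the identity on $\Fr{R_1}$, so this case cannot be discarded. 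In that situation your construction breaks: you freeze $B_1$ injectively via $f_1$ and take $\pi_1=f_1$, so $A_1$ contains $p(f_1(x_1),f_1(x_2),\nu)$ with $f_1(x_1)\neq f_1(x_2)$, whereas $\pi_2(B'_2)$ requires an atom of the form $p(c,c,\nu)$; hence $\pi_2$ is not a homomorphism from $B'_2$ to $A_1$, and your chain $u(B'_2)=u(H'_1)=H'_1\subseteq H_1$ also fails. The repair is exactly the paper's construction: take the instance to be a freezing of $u(B_1)\cup u(B_2\setminus B'_2)$ (i.e., apply $u$ to $B_1$ before freezing, so that identified frontier variables of $R_1$ collapse to the same constant) and let $\pi_1$ extend the freezing of $u$ on $\Fr{R_1}$, freezing the other variables of $B_1$ as themselves; with that change, the rest of your argument goes through unchanged.
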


\begin{proof}
Let $R_1:B_1 \rightarrow H_1$ and $R_2: B_2 \rightarrow H_2$ be two rules of a pieceful rule set and let $\mu = (B'_2,H'_1,u)$  be a piece-unifier of $R_2$ with $R_1$. Assume that $\mu$ does not satisfy the stability property. Let $\instance = u(B_1) \cup u(B_2 \setminus B'_2)$. For simplicity here, we confuse $\instance$ and its freezing, and we assume that safe renamings  are the identity (indeed, we will consider a single application of $R_1$ followed by a single application of $R_2$). 
%(variables of $\instance$ can be seen as constants, but the writing is simpler if we keep them as variables). 
$R_1$ is applicable on $\instance$ by a homomorphism $h_1$ extending $u_{\mid \Var{B_1}}$ (\emph{i.e.}, for all $x \in \Var{B_1}$, $h_1(x) = u(x)$ if $x \in \Fr{R_1}$, otherwise $h_1(x) = x$). $R_2$ is applicable on the result of this application, \emph{i.e.}, $\instance \cup h_1(H_1)$, by a homomorphism $h_2$ extending  $u_{\mid \Var{B_2}}$.
 (\emph{i.e.}, for all $x \in \Var{B_2}$, $h_2(x) = u(x)$ if $u(x)$ is defined, otherwise $h_2(x) = x$). 
Since $u$ does not satisfy the stability property, for some $x \in \Fr{R_2}$, $u(x) \in \exist{R_1}$ and for another $x' \in \Fr{R_2}$, $u(x')$ is not defined.
Hence, $h_2(x) \not  \in \Term{\instance}$, while  $h_2(x') \not \in\Term{h_1(H_1)}$, which shows that the derivation is not pieceful. 
%A contradiction. 
 % \mlm{We could show something more specific: given $\mu = (B'_2,H'_1,u)$, we call  'derivation associated with $\mu$' the derivation that starts from the instance $\body{R_2 \circ_{\mu} R_1}$ and applies $R_1$ followed by $R_2$ according to $\mu$; then: if $\mu$ does not satisfy the stability property, then the derivation associated with $\mu$ is not pieceful; reciprocally, if the derivation associated with $\mu$ is not pieceful, then $\mu = (B'_2,H'_1,u)$ does not satisfy the stability property. }
\end{proof}

 \begin{proposition}
\label{prop-pieceful-global-stability}
 If $\ruleset$ is pieceful, then for any rules $R_1$ and $R_2$ from $\ruleset$, $\ruleset \cup \{R_2 \circ_\mu R_1\}$ is also pieceful.
\end{proposition}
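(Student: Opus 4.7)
My plan is to reduce piecefulness of $\ruleset \cup \{R^*\}$, where $R^* = R_2 \circ_\mu R_1$, to piecefulness of $\ruleset$ via an expansion argument. Given an arbitrary $(\ruleset \cup \{R^*\})$-derivation $D$, I would construct an associated $\ruleset$-derivation $\tilde D$ by replacing every trigger $(R^*, \pi^*)$ by two consecutive triggers $(R_1, \pi_1)$ and $(R_2, \pi_2)$, where $\pi_1$ and $\pi_2$ are the substitutions naturally induced by the piece-unifier $\mu$: set $\pi_1(w) = \pi^*(u(w))$ on $\body{R_1}$, and set $\pi_2(v) = \pi_1^{\mathrm{safe}}(u(v))$ on $\Var{B'_2}$ and $\pi_2(v) = \pi^*(v)$ on $\Var{B_2 \setminus B'_2}$ (the two clauses agree on separating variables by condition~2 of piece-unifiers). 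One checks that $\tilde D$ is a valid $\ruleset$-derivation; in case~2 of Definition~\ref{def:existential-composition} its instances coincide with those of $D$ up to null renaming, while in case~1 they contain additional ``phantom'' atoms from $\pi_1^{\mathrm{safe}}(\head{R_1})$ that $R^*$ dropped from its head. Piecefulness of $\ruleset$ then gives that $\tilde D$ is pieceful.

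Before transferring this back to $D$, I would describe $\Fr{R^*}$ using the stability property of $\mu$, which holds by Proposition~\ref{prop-pieceful-local-stability}. In case~2, stability forces $\Fr{R_2} \subseteq \Var{B'_2}$, so $u(\Fr{R_2}) \subseteq \Var{H_1} = \Fr{R_1} \cup \exist{R_1}$, and the only non-existential variables of $u(H_1) \cup u(H_2)$ are those of $\Fr{R_1}$; hence $\Fr{R^*} = \Fr{R_1}$ and $\pi^*(\Fr{R^*}) = \pi_1(\Fr{R_1})$. In case~1, the head $u(H_2)$ has $u(\Fr{R_2})$ as its non-existential variables, so $\Fr{R^*} = u(\Fr{R_2})$ and $\pi^*(\Fr{R^*}) = \pi_2(\Fr{R_2})$; moreover the case~1 condition $u(\Fr{R_2}) \cap \exist{R_1} = \emptyset$ guarantees that $\pi_2(\Fr{R_2})$ avoids the fresh nulls created by the inserted $R_1$ sub-step, so in both cases the frontier image of $R^*$ already lives in $D$'s instance before step $i$.

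The final step is to show that, for each step $i$ of $D$, the image $\pi^{(i)}(\Fr{R^{(i)}})$ lies in $\Term{\instance}$ or in some single $\Term{B^{(j)}}$ of $D$ with $j<i$. Piecefulness of $\tilde D$ hands me an output $\tilde A_k$ of $\tilde D$ containing this image (directly for $\ruleset$-steps, via the identifications above for $R^*$-steps). If $\tilde A_k$ comes from a $\ruleset$-step of $D$ or from a sub-step of a case~2 expansion, it is contained in some $B^{(j)}$ of $D$ and we are done. The main obstacle is the remaining possibility, in which $\tilde A_k$ is the $R_1$ sub-step of a case~1 expansion at some earlier step $j' < i$: then $\tilde A_k$ contains phantom nulls absent from $D$. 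Since the frontier image lives in $D$'s instance, it cannot contain any phantom null, so it must be contained in $\pi_1(\Fr{R_1})$ for that earlier $R_1$ step. I then reapply piecefulness of $\tilde D$ to that $R_1$ step and iterate; the index of the relevant output of $\tilde D$ strictly decreases, so the recursion terminates at $\Term{\instance}$ or at an output of $\tilde D$ contained in a genuine $B^{(j)}$ of $D$, which yields the required piecefulness.
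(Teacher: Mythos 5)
Your plan is essentially the paper's own argument run in the direct rather than the contrapositive direction: the paper likewise expands each application of $R_2 \circ_\mu R_1$ into an $R_1$-application followed by an $R_2$-application (with homomorphisms induced by $\mu$, exactly as you define $\pi_1,\pi_2$) and uses the stability property from Proposition~\ref{prop-pieceful-local-stability} to argue that the composed rule's frontier behaves like (a subset of) $\Fr{R_1}$ or $u(\Fr{R_2})$, so the proposal is correct and follows the paper's route. Your explicit iteration handling the ``phantom'' $\head{R_1}$-atoms arising in the Point-1 form of composition is a sound refinement of a detail the paper glosses over (its stated isomorphism invariant is exact only for the Point-2 form).
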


\begin{proof}(Sketch)
 We show the contrapositive. Assume $\ruleset'= \ruleset \cup \{R_2 \circ_\mu R_1\}$ is not pieceful. Let $D' = \instance'_0,\ldots,\instance'_n$ be a pieceful $\ruleset'$-derivation on which $(R_{n+1},\pi_{n+1})$ is applicable in a non-pieceful way. 
 We build a non-pieceful $\ruleset$-derivation $D = \instance_1,\ldots,\instance_{\varphi(n)}$ by induction on $n$, which satisfies the following:
 
 \vspace*{-0.1cm}
\begin{itemize}
  \item For all $1 \leq i \leq n$,  there is an isomorphism $\psi_i$ from $\instance'_i$ to $\instance_{\varphi(i)}$
\item For any set $A_j$ (produced by an application in $D$), there is $A'_k$ (produced by an application in $D'$) s.t. ${A_j} \subseteq \psi_n({A'_k})$.
\end{itemize}

\vspace*{-0.1cm}
Consider now $(R_{n+1},\pi_{n+1})$. Then $\psi_n\circ\pi_{n+1}$ is a homomorphism from $\body{R_{n+1}}$ to $I_{\varphi(n)}$. If $R_{n+1} \in \ruleset$, by hypothesis on $(R_{n+1},\pi_{n+1})$, there is no $A'_k$ s.t. $\pi_{n+1}(\Fr{R_{n+1}})) \subseteq \Term{A'_k}$. By induction hypothesis, for every ${A_j}$ in $D$, there is $A'_{k'}$ s.t. ${A_j} \subseteq \psi_n({A'_{k'}})$. Hence, there is no ${A_j}$ in $D$ s.t. $\psi_n\circ\pi_{n+1}(\Fr{R_{n+1}}) \subseteq \Term{A_j}$, and $\ruleset$ is not pieceful.
  If $R_{n+1} = R_2 \circ_\mu R_1$, the same reasoning applies, noting that, since $\mu$ satisfies the stability property, either $\Fr{R_{n+1}} \subseteq u(\Fr{R_2})$ or $\Fr{R_{n+1}} \subseteq u(\Fr{R_1})$.
% \mlm{Je comprends, c'est pénible à écrire !!}\later{Détailler cette preuve si temps disponible.}
 %
%  
%  \\ \mlm{and here you use the induction hypothesis? \\ What about:\\
%  ... then there is $A_j$ such that $\psi_n\circ\pi_{n+1}(\Fr{R_{n+1}}) \subseteq \Term{A_j}$. By induction hypothesis, there is $A'_k$ such that ${A_j} \subseteq \psi_n({A'_k})$. 
%  Hence, $\psi_n^{-1}(A_j) \subseteq A'_k$. But then $\pi_{n+1}(\Fr{R_{n+1}})) \subseteq \Term{A'_k}$, which is not possible by hypothesis on  $(R_{n+1},\pi_{n+1})$.  We conclude that 
%  $(R_{n+1}, \psi_n\circ\pi_{n+1})$ makes the $\ruleset$-derivation not pieceful.}
%  \\
%  \later{D'ailleurs ne peut-on pas faire plus direct:  }
%  \\
%  But then $\psi_n(\psi_n^{-1}\circ\pi_{n+1}(\Fr{R_{n+1}})) \subseteq A'_k$.\mlm{$\Term{A'_k}$}. 
\end{proof}

From Prop. \ref{prop-pieceful-local-stability} and \ref{prop-pieceful-global-stability}, we obtain that pieceful rule sets are stable at the infinite.  
We now show the converse direction.  

\medskip
\begin{proposition}
\label{prop-sgbts-local-stability}
Any rule set that is stable at the infinite is pieceful.
% satisfies the stability property at the infinite is pieceful.
\end{proposition}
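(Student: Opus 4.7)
The plan is to prove the contrapositive: if $\ruleset$ is not pieceful, then $\ruleset^\star$ does not satisfy the stability property. I would fix a non-pieceful $\ruleset$-derivation of minimum length $n$; by minimality its prefix $\instance_0, \ldots, \instance_{n-1}$ is pieceful, and the last step $(R_n, \pi_n)$ is the unique non-pieceful application. Proposition~\ref{prop-pieceful-derivation-star} then provides, for each piece $P$ of $\instance_{n-1}$, a rule $R^*_P \in \ruleset^\star$ together with a homomorphism $\pi^*_P : \body{R^*_P} \to \instance_0$ such that $P$ injectively embeds into $\pi^{*\mathrm{safe}}_P(\head{R^*_P})$. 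Non-piecefulness forces $\pi_n(\Fr{R_n})$ to contain a null $\nu$ (otherwise $\pi_n(\Fr{R_n}) \subseteq \Term{\instance_0}$, contradicting non-piecefulness); let $P$ be the piece of $\instance_{n-1}$ containing $\nu$. The aim is to exhibit a piece-unifier between two rules of $\ruleset^\star$ that violates the stability property.

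The generic construction is a piece-unifier $\mu = (B', H', u)$ of $\body{R_n}$ with $R^*_P$, where $B' = \{a \in \body{R_n} : \pi_n(a) \in P\}$, and $u$ (together with $H'$) is induced by the injective correspondence between atoms of $P$ and $\pi^{*\mathrm{safe}}_P(\head{R^*_P})$. Validity is routine: conditions (1) and (3) hold by construction, while condition (2) holds because any separating variable $y$ of $B'$ has $\pi_n(y)$ shared between $P$ and an image atom outside $P$, which (since nulls are piece-local) forces $\pi_n(y)$ to be a constant drawn from $\pi^*_P(\Fr{R^*_P})$, so $u(y) \in \Fr{R^*_P}$. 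For the stability violation, $u$ sends the frontier variable $x_\nu$ with $\pi_n(x_\nu) = \nu$ to an existential of $R^*_P$, falsifying the first condition; and whenever some $x' \in \Fr{R_n}$ satisfies $\pi_n(x') \notin \Term{P}$, no body atom of $R_n$ containing $x'$ can map into $P$, so $x' \notin \Var{B'}$ and the second condition fails as well.

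The hard sub-case is when $\pi_n(\Fr{R_n}) \subseteq \Term{P}$ and every frontier variable of $R_n$ has some body atom mapping into $P$, so that the above unifier satisfies stability. To handle it I would refine the target, composing $R_n$ instead with a single contributor $R_{j^*}$ from the original construction of $P$, where $j^*$ is chosen to be the maximum index among the contributors to $P$ whose output $A_{j^*}$ is used by $\pi_n(\body{R_n})$. This maximality excludes the dangerous scenario in which a separating variable of $\{a \in \body{R_n} : \pi_n(a) \in A_{j^*}\}$ is mapped to a new null of $R_{j^*}$: such a null, if it appeared in another atom of $\pi_n(\body{R_n})$, would sit in some later $A_l$ contributing to $P$ that is also used by $\pi_n$, contradicting the choice of $j^*$; separating variables are therefore forced onto $\Fr{R_{j^*}}$, securing validity. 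Non-piecefulness then supplies some $t \in \pi_n(\Fr{R_n}) \setminus \Term{A_{j^*}}$, whose associated frontier variable is absent from the new body-side and thus falsifies the second stability condition, while a new null of $R_{j^*}$ lying in $\pi_n(\Fr{R_n})$ falsifies the first. The delicate point I expect to cost the most effort is guaranteeing that the maximality choice of $j^*$ still introduces a new null belonging to $\pi_n(\Fr{R_n})$; if not, a further refinement --- sharpening $j^*$ to contributors that introduce a frontier-relevant null while preserving the maximality needed for validity, or running an auxiliary induction on derivation length that replaces a prefix segment by a composed rule of $\ruleset^\star$ and recurses on a strict sub-piece of $P$ --- is needed.
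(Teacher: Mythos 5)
Your generic construction and your identification of the hard sub-case are fine, but the way you handle that sub-case has a genuine gap --- precisely the point you flag yourself. Choosing $j^*$ as the maximal index among the contributors of $P$ met by $\pi_n(\body{R_n})$ does secure condition (2) of the piece-unifier, but it gives no reason why any null created at step $j^*$ should occur in $\pi_n(\Fr{R_n})$ (the image of $\body{R_n}$ may meet $A_{j^*}$ only in old terms, and even a step-$j^*$ null in that image need not be hit by the frontier). Without such a null, no frontier variable of $R_n$ is unified with an existential variable of $R_{j^*}$, the first disjunct of stability holds, and no violation is produced. Your suggested sharpening (take instead the largest step that introduces a frontier-hit null) runs into the opposite problem: a null created at that step can be re-used by later pieceful applications whose outputs are also met by $\pi_n(\body{R_n})$, so a separating variable of the unified part may be forced onto an existential variable, destroying validity. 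The two requirements (validity of the unifier and an existential hit) pull in opposite directions, and neither of your refinements, as stated, reconciles them; the auxiliary induction you mention is not carried out.

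The paper reconciles them by a re-basing trick rather than by unifying with a single original contributor. Let $j$ be the largest step whose application creates a null belonging to $\pi_n(\Fr{R_n})$, let $\nu$ be such a null, and regard $\instance_{j-1}$ as the initial instance: the suffix derivation is still pieceful, and Proposition~\ref{prop-pieceful-derivation-star} applied to it yields a single composed rule $R_{P_\nu}\in\ruleset^\star$ generating, from $\instance_{j-1}$, the whole piece $P_\nu$ of $\instance_{n-1}$ taken w.r.t.\ the nulls created from step $j$ onward. Unifying $\pi_n^{-1}(P_\nu)$ with $R_{P_\nu}$ then succeeds on both fronts. Validity: a separating variable has its image shared with an atom outside $P_\nu$, hence that image is a term of $\instance_{j-1}$ and corresponds to a frontier variable of $R_{P_\nu}$. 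Existential hit: $\nu$ is new w.r.t.\ $\instance_{j-1}$, so the frontier variable mapped to $\nu$ is unified with an existential variable of $R_{P_\nu}$. Missing frontier variable: by maximality of $j$, the nulls in $\pi_n(\Fr{R_n})$ are created at steps $\leq j$, so those lying in $P_\nu$ are step-$j$ nulls, hence in $\Term{A_j}$; and since the suffix is pieceful, every old term occurring in $P_\nu$ comes from $\pi_j(\Fr{R_j})\subseteq\Term{A_j}$; thus if every frontier variable of $R_n$ occurred in $\pi_n^{-1}(P_\nu)$ we would get $\pi_n(\Fr{R_n})\subseteq\Term{A_j}$, contradicting non-piecefulness. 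This folding of all contributions from step $j$ onward into one composed rule is the idea your outline lacks; without it the case $\pi_n(\Fr{R_n})\subseteq\Term{P}$ remains open.
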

\begin{proof} (Sketch)
We prove the result by contrapositive. Considering the first application $(R_n,\pi_n)$ that violates the pieceful constraint in a derivation, we consider a piece $P$ that contains $\pi(x)$ for some $x \in \Fr{R}$. We consider $R_P$ that generates $P$ (using Proposition \ref{prop-pieceful-derivation-star}), and build a unifier $\mu$ of $R$ with $R_P$ that violates the stability property.
\end{proof}

% 
% 
% Indeed, let $B' = \pi^{-1}(P_\nu)$. As $R,\pi$ violate the \emph{sgbts} property and all images of frontier variables have already been created when $\nu$ is created, there exists a frontier variable of $R$ that does not appear in $B'$. 
% 
% Let us notice that $\nu \in u(fr(R)) \cap exists(R')$, which shows that $\mu$, unifier of $R$ with $R'$, does not satisfy the stability property.

%Let $x_a$ and $x_b$ be two frontier variables from $B$ such that $\pi(x_a)$ and $\pi(x_b)$ are not both constant terms and do not belong to a common $A_j$. 
%Assume $x_a$ is mapped to a null. 
%Let $A_i$ be the greatest in the derivation that contain the image by $\pi$ of an atom $a$ of $B$ with variable $x_a$. 
%We build a piece-unifier $\mu = (B',H'_i,u)$ of $B$ with $H_i$. 
%For $B'$: maximal set of atoms from $B$ that map to $A_i$ ($B'$ is not empty). For $H'_i$: maximal set of atoms from $H_i$ that give the subset of $A_i$ to which $B'$ maps.  
%For $u$: on $H'_i$, it specializes the frontier of $H_i$ as $A_i$ does, on $B'$ it is defined from $A_i$ and $H_i$. 
%Now, why would a separating variable from $B'$ be mapped to a frontier variable of $R_i$? 
%Since $x_a$ is mapped to a null, and $A_i$ be the greatest in the derivation that maps an atom of $B'$ containing $x_a$, all the atoms of $B'$ containing $x_a$ are mapped to $A_i$
%\\ 
%It we can define this piece-unifier $\mu$, we are done because $x_a$ is unified with an existential variable and $x_b$ is not unified, hence $\mu$ does not fulfill the stability property. 
%

From the three previous propositions, we obtain the desired result:

\begin{theorem}\label{th-pieceful-stable}
A rule set is pieceful if and only if it is stable at the infinite. 
\end{theorem}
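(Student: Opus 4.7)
The plan is to obtain the theorem as a direct combination of Propositions~\ref{prop-pieceful-local-stability}, \ref{prop-pieceful-global-stability}, and \ref{prop-sgbts-local-stability}, which together constitute the two implications up to one bridging argument: namely, that piecefulness is preserved under the (possibly transfinite) closure defining $\ruleset^{\star}$.

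For the direction from pieceful to stable at the infinite, I would first show by induction on the construction of $\ruleset^\star$ that every rule set $\ruleset_i \subseteq \ruleset^\star$ obtained after finitely many composition steps from $\ruleset$ is pieceful. The base case is the hypothesis on $\ruleset$, and the induction step is exactly Proposition~\ref{prop-pieceful-global-stability} applied to two rules $R_i, R_j$ already known to lie in a pieceful set $\ruleset_n$, adding $R_i \circ_\mu R_j$. Piecefulness is then inherited by $\ruleset^\star$ itself: any $\ruleset^\star$-derivation $D$ has finite length, mentions finitely many rules, so there is some finite stage $\ruleset_n$ with $D$ an $\ruleset_n$-derivation; piecefulness of $\ruleset_n$ then yields piecefulness of $D$. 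Applying Proposition~\ref{prop-pieceful-local-stability} to $\ruleset^\star$ gives that every piece-unifier between two rules of $\ruleset^\star$ satisfies the stability property, which is exactly the definition of $\ruleset$ being stable at the infinite.

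The converse direction is immediate: if $\ruleset$ is stable at the infinite then, in particular, $\ruleset^\star \supseteq \ruleset$ satisfies the stability property between any two of its rules, hence so does $\ruleset$. More importantly, Proposition~\ref{prop-sgbts-local-stability} directly states that stability at the infinite implies piecefulness, which finishes the argument without any additional work.

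The main obstacle I expect is the bridging step in the forward direction, namely lifting Proposition~\ref{prop-pieceful-global-stability} from a single added composition to the full (potentially infinite) closure $\ruleset^\star$. The cleanest way to handle this is to reason derivation by derivation rather than on $\ruleset^\star$ as a whole: since any $\ruleset^\star$-derivation is finite, only finitely many composed rules are involved, and a straightforward finite induction using Proposition~\ref{prop-pieceful-global-stability} suffices. Once this observation is in place, both implications collapse to invocations of the already-proved propositions, and the theorem follows.
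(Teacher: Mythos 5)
Your proposal is correct and takes essentially the same route as the paper, which also obtains the theorem directly from Propositions~\ref{prop-pieceful-local-stability}, \ref{prop-pieceful-global-stability} and \ref{prop-sgbts-local-stability}; the finite induction you spell out to lift Proposition~\ref{prop-pieceful-global-stability} from one added composition to all of $\ruleset^{\star}$ (using that every rule of $\ruleset^{\star}$ and every derivation involves only finitely many composition steps) is precisely the bridging step the paper leaves implicit. The only cosmetic point is that the first sentence of your converse paragraph (stability of $\ruleset$ itself) is superfluous, since Proposition~\ref{prop-sgbts-local-stability} alone gives that direction.
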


Finally, as a corollary of Th. \ref{th-parallelisable} and  \ref{th-pieceful-stable}, we obtain another characterization of parallelisable sets of rules.

\begin{corollary}
A rule set is parallelisable if and only if it is both bounded and stable at the infinite.
\end{corollary}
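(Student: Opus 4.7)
The plan is essentially a direct substitution. By Theorem \ref{th-parallelisable}, the property of being parallelisable is equivalent to the conjunction of boundedness and piecefulness. By Theorem \ref{th-pieceful-stable}, piecefulness is equivalent to stability at the infinite. Replacing piecefulness by its equivalent characterization in the first biconditional immediately yields the desired equivalence between parallelisability and the conjunction of boundedness and stability at the infinite.

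Concretely, for the forward direction I would argue: if $\ruleset$ is parallelisable, then Theorem \ref{th-parallelisable} gives boundedness and piecefulness; applying Theorem \ref{th-pieceful-stable} to the second conjunct yields stability at the infinite. For the converse: if $\ruleset$ is both bounded and stable at the infinite, then Theorem \ref{th-pieceful-stable} turns stability at the infinite into piecefulness, so $\ruleset$ is bounded and pieceful, and Theorem \ref{th-parallelisable} concludes that it is parallelisable.

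There is no real obstacle here since both ingredients have already been established. The only thing to be careful about is simply citing the two theorems correctly and making clear that the corollary is obtained purely by chaining the two equivalences, with no further chase- or unification-level argument needed.
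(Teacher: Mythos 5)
Your proposal is correct and matches the paper exactly: the corollary is stated there as an immediate consequence of Theorem~\ref{th-parallelisable} and Theorem~\ref{th-pieceful-stable}, obtained precisely by substituting the characterization of piecefulness (stability at the infinite) into the characterization of parallelisability (bounded and pieceful). No further argument is needed, and your careful two-direction chaining is exactly the intended reasoning.
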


% \todo{Conclusion? If $\ruleset$ is parallelisable then it is parallelised by (a finite subset of) $\ruleset^\star$?} 

\subsection{Beyond Existential Composition}\label{sec-succint-composition}
%%%%%

In this section, we question the notion of rule composition and provide preliminary findings. We define another rule composition operation, which seems to fit better with our intuition and is more succinct (hence, its name `compact'), but which goes beyond existential rules. We then show that, for piece-unifiers that satisfy the stability property, existential and compact compositions actually coincide. 

For datalog, we know that $\ruleset^\star$ is finite if and only if $\ruleset$ is bounded. 
%\mb{Comment a-t-ton ce résultat intuitivement ? C'est le premier résultat sur la finitude de $\ruleset^\star$ après l'exemple 7, qu'il faudrait peut-être mentionné ici.}\mlm{C'est l'unfolding de Datalog, debut section 4. Mais tu as raison on n'a pas fait remarquer que notre R* sur datalog correspondait à l'unfolding. }
To better understand why this is no longer true for existential rules, let us focus on compositions of the form $R_2 \circ R_1$, with $\exist{R_1} \neq \emptyset$. Intuitively, the rules $R_2 \circ R_1$ capture the situations in the chase  where `an application of $R_1$ leads to trigger a new application of $R_2$'. More formally: for any instance $I$, application of $R_1$ to $I$ yielding $I_1$, 
homomorphism $\pi_2$ from $\body{R_2}$ to $I_1$ such that $\pi_2(\body{R_2}) \not \subseteq I$, with $I_2=\alpha(I_1,R_2,\pi_2)$, there is a composed rule $R_2 \circ R_1$  whose application to $I$ yields an instance isomorphic to $I_2$. 
One might be tempted to conclude that the set of all rules of the form $R_2 \circ_{\mu} R_1$ (\emph{i.e.}, for all piece-unifiers $\mu$)  is able to capture `all applications of $R_2$ that use an atom brought by an application of $R_1$', \emph{i.e.},
$\chase{1}{\instance}{\{R_1, R_2 \circ_{\mu} R_1 |  \forall \mu \}}$ would be equivalent to 
$\instance_1 = \chase{1}{\instance}{\{R_1\})) \cup  \{ \alpha(I_1, R_2, \pi_2) |  \pi_2(\body{R_2}) \not \subseteq \instance \}}$

However, this does not hold, as illustrated below. 

\begin{example} Consider again the prime example with $\ruleset = \{R_1, R_2\}$:
\\
$R_1: A(x) \rightarrow \exists z~p(x,z)$
\\
$R_2 : p(x,z) \land B(y) \rightarrow r(z,y)$
\\
$R_2 \circ R_1 : A(x) \land B(y) \rightarrow \exists z~p(x,z) \land r(z,y)$. 
\\
Let $\instance = \{A(a), B(b), B(c) \}$. Then:
$$chase_\infty(\instance,\ruleset) = I \cup \{ p(a,z_0), r(z_0, b), q(z_0, c) \}$$
As it is obtained by one application of $R_1$ which triggers two parallel applications of $R_2$, this also corresponds to the above $(\instance_1 = chase_1(\instance, \{R_1\})) \cup  \{ \alpha(I_1, R_2, \pi_2) |  \pi_2(\body{R_2}) \not \subseteq \instance \}$. One (breadth-first) chase step with $\ruleset \cup \{R_2 \circ R_1\}$ would produce instead: 
$$I \cup \{ p(a,z_0), p(a,z_1), r(z_1, b), p(a,z_2), r(z_2, c) \}$$
Note that this here equal to $chase_1(\instance, \{R_1, R_2 \circ_{\mu} R_1 |  \forall \mu \})$. 
We can see here that two nulls $z_1$ and $z_2$ are created instead of a single one. Obviously, both results are not equivalent. F.i., the Boolean CQ $q() = \exists u~r(u, b) \land r(u, c)$ would be answered positively in the first case, but not in the second. 
\end{example}

In the previous example, the logical formula associated with $R_2 \circ R_1$ is the following: 
$$\forall x \forall y ~(A(x) \land B(y) \rightarrow \exists z~(p(x,z) \land r(z,y))).$$
% or, equivalently: 
% $$\forall x \forall y \exists z~(A(x) \land B(y) \rightarrow ~p(x,z) \land r(z,y))$$ 
% \mt{What does this equivalent formulation bring?}
% \mlm{On peut l'enlever si tu veux, c'était pour avoir une forme proche la plus proche possible de celle de la compact}

Instead, we propose to interpret rule composition as:
$$\forall x ~(A(x) \rightarrow \exists z~(p(x,z) \land \forall y(B(y) \rightarrow r(z,y))))$$

%which is equivalent to: \mlm{to be checked}
%
%$ \forall x (A(x) \rightarrow \exists z p(x,z)) ~\land ~\forall x \exists z \forall y (A(x) \land B(y) \rightarrow ~p(x,z) \land r(z,y))$.
%
%This is equivalent to the FO formula associated with $u(R_1) \land (R_2 \bullet R_1)$. Since $u(R_1)$ is strongly redundant with $R_1$ and $R_1 \in \ruleset^*$, we keep only $(R_2 \bullet R_1)$.

By removing the knowledge entailed by $R_1$, we obtain: 
$$R_2 \bullet R_1 = \forall x \exists z \forall y ~(A(x) \land B(y) \rightarrow p(x,z) \land r(z,y))$$

%\mlm{Check!} \mb{checked}
\begin{definition}[Compact Composition]
 \label{def:compact-composition}
  Let \\
  $R_1 = \forall\vect{x_1}\forall\vect{y_1} ~[~B_1(\vect{x_1},\vect{y_1}) \rightarrow \exists \vect{z_1} ~H_1(\vect{x_1},\vect{z_1})~]$ and \\
 $R_2 = \forall\vect{x_2}\forall\vect{y_2} ~[~B_2(\vect{x_2},\vect{y_2}) \rightarrow \exists \vect{z_2} ~H_2(\vect{x_2},\vect{z_2})~]$.\\
        Let $\mu = (B'_2, H'_1, u)$ be a piece-unifier of $B_2$ with $R_1$. 
        
   The \emph{compact composition} of $R_2$ with $R_1$ w.r.t. $\mu$, denoted by $R_2 \bullet_\mu R_1$ is the following closed formula:
   \begin{align*}
    \forall \vect {x'_1} &	\forall \vect {y_1}\exists \vect {z_1} \forall \vect {x'_2} \forall \vect {y'_2} ~  \\
   & (u(B_1) \land  u(B_2 \setminus B'_2) \rightarrow \exists \vect{z_2}~ (u(H_1) \land  u(H_2)))
   \end{align*}
% \forall \vect {x'_1} \forall \vect {y_1}\exists \vect {z_1} \forall \vect {x'_2} \forall \vect {y'_2} ~  
      where $\vect {x'_1} = u(\vect{x_1})$, $\vect {x'_2} = \vect{x_2} \setminus \Var{B'_2}$, $\vect {y'_2} = \vect{y_2} \setminus \Var{B'_2}$. 
%      \begin{itemize}
%      \item $\vect {x'_1} = u(\vect{x_1})$,
%      \item $\vect {x'_2} = u(\vect{x_2}) \setminus \Var{u(B'_2)}$,  \mlm{which is equal to: $\vect{x_2} \setminus \Var{B'_2}$  since these are the fr2-variables that are not unified} \mb{je suis pour, mais il faut aussi revoir le point 2 de la stabilité dans ce cas}
%      \item $\vect {y'_2} = u(\vect{y_2}) \setminus \Var{u(B'_2)}$   \mlm{which is equal to: $\vect{y_2} \setminus \Var{B'_2}$ since these are the non-fr2 variables that are not unified }
%      \end{itemize}
\end{definition}  

Compact composition is more succinct than existential composition, since a single $\bullet$-composed rule may capture an unbounded number of 
$\circ$-composed rules.  However, the resulting formula is generally not an existential rule. We show below that, when the unifiers involved in rule composition satisfy the stability property, the result of compact composition is equivalent to an existential rule (hence the name `existential stability') and it coincides with existential composition  
(for clarity, we consider the general form of existential composition, \emph{i.e.}, ignore Point 1 in Def. \ref{def:existential-composition}). 
% ignore the simplification due to pieces). 

\begin{proposition} \label{prop-compact-existential}
When the stability property is satisfied, the compact composition is equivalent to the existential composition, \emph{i.e.}, 
for any piece-unifier $\mu$ of a rule $R_2$ with a rule $R_1$ satisfying the stability property,  $R_2 \circ_{\mu} R_1 \equiv R_2 \bullet_\mu R_1$ 
(where $\equiv$ denotes the logical equivalence and $\circ$ is defined according to Point 2 of Def. \ref{def:existential-composition}). 
\end{proposition}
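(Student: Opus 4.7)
The plan is to establish $R_2 \circ_\mu R_1 \equiv R_2 \bullet_\mu R_1$ as a purely first-order equivalence, using the piece-unifier conditions together with the stability hypothesis to control the placement of $\exists \vect{z_1}$ in the quantifier prefix. Let $\phi_B \eqdef u(B_1) \wedge u(B_2 \setminus B'_2)$ and $\phi_H \eqdef u(H_1) \wedge u(H_2)$. The two formulas can be written
\[
F_\circ = \forall \vect{x'_1} \forall \vect{y_1} \forall \vect{x'_2} \forall \vect{y'_2}\, [\phi_B \rightarrow \exists \vect{z_1} \exists \vect{z_2}\, \phi_H]
\]
and
\[
F_\bullet = \forall \vect{x'_1} \forall \vect{y_1} \exists \vect{z_1} \forall \vect{x'_2} \forall \vect{y'_2}\, [\phi_B \rightarrow \exists \vect{z_2}\, \phi_H],
\]
differing only in whether $\exists \vect{z_1}$ is placed before or after $\forall \vect{x'_2} \forall \vect{y'_2}$.

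The first step is a scope observation: the variables $\vect{z_1}$ do not occur free in $\phi_B$. This follows from the piece-unifier definition---condition~1 keeps $\Fr{R_1}$ inside itself, condition~2 sends separating variables of $B'_2$ into $\Fr{R_1}$, and $\vect{z_1} = \exist{R_1}$ never appears in any body. From this alone the easy direction $F_\bullet \Rightarrow F_\circ$ follows by two standard first-order moves: commute $\exists \vect{z_1}$ past $\forall \vect{x'_2} \forall \vect{y'_2}$ using the validity $\exists x \forall y\, \varphi \Rightarrow \forall y \exists x\, \varphi$, then slide it into the conclusion via $\exists \vect{z_1}(\alpha \rightarrow \beta) \equiv \alpha \rightarrow \exists \vect{z_1}\, \beta$, which is valid since $\vect{z_1}$ is not free in $\phi_B$.

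The nontrivial direction $F_\circ \Rightarrow F_\bullet$ is where stability is used, and I would split on its two disjuncts. If $\Fr{R_2} \subseteq \Var{B'_2}$, then $\vect{x'_2}$ is empty, and $u$ being defined on all of $\vect{x_2}$ forces $u(\vect{x_2}) \subseteq \Var{\head{R_1}}$, so $\phi_H$ has free variables only in $\vect{x'_1} \cup \vect{z_1} \cup \vect{z_2}$---in particular, $\vect{y'_2}$ does not appear in $\phi_H$. If instead $u(\Fr{R_2}) \cap \exist{R_1} = \emptyset$, then $\vect{z_1}$ cannot appear in $u(H_2)$, so $\phi_H$ splits as $u(H_1)(\vect{x'_1}, \vect{z_1}) \wedge u(H_2)(\vect{x'_1}, \vect{x'_2}, \vect{z_2})$ with $\vect{z_1}$ confined to the first conjunct, which is independent of $\vect{x'_2}, \vect{y'_2}$. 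In both cases, the conjuncts of $\phi_H$ that constrain $\vect{z_1}$ do not mention the outer universals $\vect{x'_2}, \vect{y'_2}$.

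With this in hand, the last step is witness extraction. Fix a structure and an assignment $\vect{a'_1}, \vect{b_1}$ to the outer universals of $F_\bullet$. If no extension to $\vect{x'_2}, \vect{y'_2}$ satisfies $\phi_B$, then $F_\bullet$ holds vacuously at this point. Otherwise, pick one such extension and apply $F_\circ$ to obtain witnesses $\vect{d_1}, \vect{d_2}$; I claim that $\vect{d_1}$ is a valid $\vect{z_1}$-witness for $F_\bullet$, because for every further $\vect{x'_2}, \vect{y'_2}$ making $\phi_B$ true a fresh application of $F_\circ$ yields some $\vect{z_2}$-witness satisfying $u(H_2)$, while $u(H_1)(\vect{a'_1}, \vect{d_1})$ remains true by the scope analysis above. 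The main obstacle is not logical but notational: carefully tracking which variables remain free in $\phi_B$ and in each conjunct of $\phi_H$ under $u$, in each of the two stability cases; once this bookkeeping is pinned down, the argument reduces to textbook manipulation of first-order prenex forms.
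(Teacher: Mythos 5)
Your proof is correct and takes essentially the same route as the paper: both reduce the claim to commuting $\exists \vect{z_1}$ with $\forall \vect{x'_2}\forall \vect{y'_2}$ and use the two stability disjuncts to show that $u(H_2)$ contains no variable of $\vect{z_1}$ (first case) or no variable of $\vect{x'_2}$ (second case, where $\vect{x'_2}$ is empty and $\vect{y'_2}$ never occurs in heads), which licenses the swap; your witness-extraction step just makes explicit the semantic argument the paper invokes via its atom-grouping criterion. One small polish: in the second case your closing sentence should simply reuse the original $\vect{z_2}$-witness (since $\phi_H$ mentions neither $\vect{x'_2}$ nor $\vect{y'_2}$) rather than a fresh application of $F_\circ$, whose $u(H_2)$-witness is tied to a fresh $\vect{z_1}$-value; the independence fact you state in the preceding paragraph already gives this immediately.
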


\begin{proof} 
% We recall that $R_2 \bullet_\mu R_1 = \forall \vect {x'_1} \forall \vect {y_1}\exists \vect {z_1} \forall \vect {x'_2} \forall \vect {y'_2} ~  (u(B_1) \land  u(B_2 \setminus B'_2) \rightarrow \exists \vect{z_2}~ u(H_1) \land  u(H_2))$, where the four sets of atoms are respectively built on the following sets of variables:
% \begin{itemize}
% \item $u(B_1)$: $\vect {x'_1}$, $\vect {y_1}$
% \item $u(B_2 \setminus B'_2)$: $\vect {x'_2}$, $ \vect {y'_2}$, $\vect {x'_1}$
% \item $u(H_1)$: $\vect {x'_1}$, $\vect {z_1}$
% \item $u(H_2)$: $\vect {x'_1}$, $\vect {x'_2}$, $\vect{z_2}$, $\vect {z_1}$
% \end{itemize}
% For $R_2 \circ_\mu R_1$, we take the general form (Point 2 of Def. \ref{def:existential-composition}) as the decomposition into single-piece heads is pointless here. 
Note that $R_2 \circ_\mu R_1$ is equivalent to the formula obtained from $R_2 \bullet_\mu R_1$ by passing $\exists \vect {z_1}$ after $\forall \vect {x'_2} \forall \vect {y'_2}$. 
Here, such an inversion of quantifiers can be done without change of semantics if we can group the atoms in $R_2 \bullet_\mu R_1$ into two sets, such that one that contains all the atoms with variables in $\vect {z_1}$ and the other all the atoms with variables in  $\vect {x'_2} \cup \vect {y'_2}$. Since $u(H_2)$ may contain variables from both $\vect {x'_2}$ and $\vect {z_1}$, this is possible if and only if $u(H_2)$ contains no variable from $\vect {x'_2}$ or no variable from $\vect {z_1}$. We check that it is indeed the case when $\mu$ has the stability property: if  $u(\Fr{R_2}) \cap  \exist{R_1} = \emptyset$, then $u(H_2)$ does not contain any variable from $\vect {z_1}$; if $\Fr{R_2} \subseteq \Var{B'_2}$, then $\vect {x'_2}$ is empty, hence $u(H_2)$ does not contain any variable from $ \vect {x'_2}$. 
%\begin{itemize}
%\item If  $u(\Fr{R_2}) \cap  \exist{R_1} = \emptyset$, then $u(H_2)$ does not contain any variable from $\vect {z_1}$. 
%\item If $\Fr{R_2} \subseteq \Var{B'_2}$, then $\vect {x'_2}$ is empty, hence $u(H_2)$ does not contain any variable from $ \vect {x'_2}$. 
%\end{itemize}
\end{proof}
 
Finally, one could think of skolemizing existential rules to get (specific) logic-programming rules. Briefly, skolemization consists of replacing each existential variable in a rule head by a fresh functional term over the rule frontier. 
Then, rule composition is based on classical (most general) unifiers.
However, as illustrated below, the composition of two skolemized existential rules may not be a skolemized existential rule. Actually, this composition can be seen as the skolemization of the formula obtained by compact composition, which leads us again beyond the existential rule fragment. 

 \begin{example}[Skolem composition]
Consider the skolemization (noted $sk$) of the rules from the prime example:\\
$sk(R_1): A(x) \rightarrow p(x,f(x))$\\
$sk(R_2) = R_2: p(x,z) \land B(y) \rightarrow r(z,y)$\\
Then, the composition of $R_2$ with $sk(R_1)$ yields the following rule, where $p(x,f(x))$ could be removed:\\
\centerline{$A(x) \land B(y) \rightarrow p(x,f(x)) \land r(f(x),y)$}
This rule is not the skolemization of any existential rule because $f$ does not span over the whole rule frontier $\{x,y\}$. Instead, it is $sk(R_2 \bullet R_1)$. 
%  The following Skolem composition of $R_2$ with $R'_1$ would be obtained similarly as the compact composition by finding an unifier instead of a piece-unifier (it is not necessary here) between $p(x,f(x)$ and $p(x,z)$:
%    $A(x) \land B(y) \rightarrow p(x,f(x)) \land q(f(x),y)$. 
%    \\
%    We observe that this Skolem composition corresponds to the Skolem normal form of the compact composition $R_2 \bullet_\mu R_1$, which is different from the Skolem normal form of $R_2 \circ_\mu R_1$, since it contains the Skolem term $f(x,y)$ instead of $f(x)$. 
\end{example}

%We leave for further work the study of the expressivity of the logical fragment obtained with $\bullet$-composition and its relevance to existential rule sets parallelisation.

%%% Local Variables:
%%% TeX-master: "krcr-main"
%%% End:

%\onecolumn
%\newpage
%\twocolumn

\section{Concluding Remarks}\label{sec-conc}

%%%

%!TEX root = krcr-main.tex
%% Further work
In this paper, we introduce the notion of parallelisability of a set of existential rules and characterize parallelisable rule sets in two different ways.
One characterization relies on the behavior of rules during the chase, which leads to define a new class of existential rules, namely pieceful. Another characterization relies on the behavior of rules during rewriting, which led us to question the notion of rule composition. 
We believe that these results open up many perspectives, which we now outline. 

\vspace*{-0.3cm}
\paragraph{Application to OBDA}
%As explained in the introduction, a motivation for this work was the design of new query answering techniques for the implementation of expressive yet efficient OBDA systems. Based on \cite{DBLP:journals/aim/Lenzerini18}, an OBDA specification is a triple $\mathcal J= (\mathcal O, \mathcal S, \mathcal M)$ where $\mathcal O$ is an ontology,  $\mathcal S$ a relational schema and  $ \mathcal M$ a mapping from  $\mathcal S$ to $\mathcal O$; the mapping is composed of assertions of the form
%$q_S(\vect{x}) \rightarrow q_O(\vect{x})$, where $q_S$ is a query on schema $S$ and $q_O$ is a conjunctive query using the vocabulary of $\mathcal O$. An OBDA system is then a pair $\mathcal S = (\mathcal J, D)$ where $D$ is a database on $\mathcal S$. 
Mature systems, such as OnTop \cite{DBLP:journals/semweb/CalvaneseCKKLRR17} or Mastro \cite{DBLP:journals/semweb/CalvaneseGLLPRRRS11}, are based on lightweight description logics (typically DL-Lite$_\ruleset$  underpinning the W3C language OWL2 QL) and the mapping is GAV (\emph{i.e.}, mapping assertions $q_S(\vect{x}) \rightarrow q_O(\vect{x})$ satisfy $\Var{q_O} \subseteq \vect{x}$). Some other OBDA systems are based on the lightweight ontological language RDFS, e.g., UltraWrap \cite{DBLP:conf/semweb/SequedaAM14}, where RDFS is extended with inverse and transitive properties, and mappings are still GAV, and Obi-Wan \cite{DBLP:journals/pvldb/BuronGMM20}, strictly restricted to RDFS but with GLAV mappings (i.e., $q_O$ is any CQ).% and heterogeneous data sources. 
 We believe that the existential rule framework is particularly well suited to the development of OBDA systems in more expressive settings. Indeed, existential rules 
are able to express both ontological knowledge and relational GLAV mappings; GLAV mappings provide increased flexibility compared to GAV mappings, by their ability to invent values, thanks to existential variables. This yields a uniform setting for the whole OBDA specification, thereby facilitating the analysis of the interactions between $\mathcal O$ and $\mathcal M$, which is key to efficiency. Furthermore, a rich family of existential rule dialects achieving various expressivity/tractability tradeoffs are available. 
%Note that both steps of query rewriting in this setting, \emph{i.e.}, with respect to the ontology and the mapping, are fundamentally based on the same notions, with piece-unifiers at the core.  
In this paper, we have taken a first step towards the design of compilation techniques for this setting, by characterizing the notion of parallelisability. Indeed, when the rule set is parallelisable, ontological reasoning can be totally compiled into the mapping. Our results pave the way for the development of query answering techniques exploiting parallelisation and the specificities of existential rule dialects. Another interesting extension of our work would be to investigate combined reasoning \cite{lutz:09} or partitioned reasoning \cite{blms:11}, which would allow us to go beyond parallelisable rule sets. 
\vspace*{-0.3cm}
\paragraph{Deepening Theoretical Foundations} The novel pieceful class is of interest in itself, specially in the context of query answering. It includes both datalog and frontier-guarded, a prominent class of existential rules. Frontier-guarded itself covers some major DL dialects used in query answering (\emph{e.g.}, DL-Lite$_\mathcal{R}$ or $\mathcal{ELHI}$, see \cite{DBLP:journals/ki/Mugnier20} for more details on the relationships between these DLs and the gbts family). We conjecture that CQ answering with pieceful rules is in PTime for data complexity. By all these features, the pieceful class is related to the recently introduced warded class \cite{DBLP:journals/pvldb/BellomariniSG18}, even if both are incomparable, at least from a syntactic viewpoint (our prime example is warded but not pieceful, on the other hand warded does not include frontier-guarded). 

Concerning the notion of parallelisability itself, we based our study on the semi-oblivious chase. In line with this, parallelisability requires an \emph{injective} homomorphism from $\chase{\infty}{\instance}{\ruleset}$ to $\chase{1}{\instance}{\ruleset'}$ (where $\ruleset'$ is the parallelisation of  $\ruleset$). 
Dropping the injectivity requirement, \emph{i.e.}, considering logical equivalence, would lead to a more general notion of parallelisability, which remains to be investigated.

Finally, we have seen that compact rule composition leads us beyond the existential rule fragment. The obtained formulas belong to strictly more expressive logical fragments that have been studied in particular in \cite{DBLP:conf/pods/GottlobPS15}. Whether such fragments could lead to parallelisation techniques is another open issue.

\paragraph{Acknowledgments} This work is partly supported by the ANR project CQFD (ANR-18-CE23-0003).

\onecolumn
\newpage
\twocolumn
% The file kr.bst is a bibliography style file for BibTeX 0.99c
\bibliographystyle{kr}
\bibliography{krcr-biblio}

\newpage
\onecolumn
\appendix
\section{Proofs of Section 3}

\paragraph{Proof of Proposition \ref{prop-not-sgbts-unbounded-arity-of-nulls}}

Let us assume that $\ruleset$ is not pieceful. There thus exists a non-pieceful derivation. From that derivation, we build a set of instances $\{\instance_i\}$ such that the chase of $\instance_i$ contains a null in that occurs in at least $i$ atoms (together with at least $i$ terms from $\instance_i$).
 %that is connected to at least $i$ terms of $\instance_i$.
 
 Let $\instance^0,\ldots,\instance^{n-1},\instance^{n}$ be a non-pieceful derivation such that $\instance^0,\ldots,\instance^{n-1}$ is pieceful, \emph{i.e.}, $(R_n,\pi_n)$ applied on $\instance^{n-1}$ is the first application that violates the pieceful condition. %\mlm{Alert: dans la R-derivation des prelims, $(R_n,\pi_n)$ s'applique sur $\instance^{n-1}$ et produit $\instance^n$. Où est-ce le plus simple de changer? Preuves ou prelims (auquel cas il faut aussi changer la def de pieceful derivation)}
 At least one frontier variable of $R_n$ is mapped to a null. Let $k$ be the largest integer such that $(R_k, \pi_k)$ introduces a null (in $\instance^k$) to which a frontier variable of $R_n$ is mapped by $\pi_n$. Let  $\nu^*$ be this null.%, created from the existential variable $z^*$. 
 We define by induction on $i$ the instance $\instance_i$ as follows, 
 where each  $f_i(\instance^{k-1})$ denotes a freezing of $\instance^{k-1}$: 
%\mlm{So it should be $\instance^{k-1}$ instead of $\instance^{k}$ in the following ??}
  \emph{(i)} $\instance_0 = f_0(\instance^{k-1})$ and \emph{(ii)} for any $i \geq 1$, $\instance_i = \instance_{i-1} \cup f_i(\instance^{k-1})$ where:
 \begin{itemize}
  \item if $x \in \pi_k(\Fr{R_k})$, then for all $i$, $f_i(x) = f_0(x)$,
  \item if $x \not \in \pi_k(\Fr{R_k})$, then $f_i(x)$ is a fresh constant (w.r.t. the whole construction).
 \end{itemize}
  
 Intuitively, $\instance_i$ is built from $i+1$ copies of $\instance^{k-1}$, where all the terms have been freshly renamed except for those in $\pi_k(\Fr{R_k})$. 
 % Phrase à ajouter si on coupe :  
 % From any $\instance_i$, we can build a derivation that mimics the initial derivation $\instance^{k-1} \ldots  \instance^{k-1}$ and creates a null that occurs in $i+1$ atoms.
 We now associate to $\instance^{k-1},(R_k,\pi_k,\instance^k),\ldots,(R_n,\pi_n,\instance^n)$ a derivation $\hat{\instance}_{\varphi(k-1)},\ldots,\hat{\instance}_{\varphi(k)},\ldots$ by induction on $n-k$:
\begin{itemize}
 \item we define $\hat{\instance}_{\varphi_i(k-1)} = \instance_i$; $f_j$ is a homomorphism from $\instance^{k-1}$ to $\hat{\instance}_{\varphi_i(k-1)}$;
 \item let us assume that $\instance_{\varphi_i(\ell)}$ is defined and that the $f_j$ are homomorphisms from $\instance^{\ell}$ to $\instance_{\varphi_i(\ell)}$; then $(R_{\ell+1},f_j\circ\pi_{\ell+1})$ are all applicable to  $\instance_{\varphi_i(\ell)}$, and let us consider $\instance_{\varphi_i(\ell+1)}$ the instance obtained after applying all these rules; note that these triggers may generate all the same atoms, which is in particular the case for those applied on  $\hat{\instance}_{\varphi_i(k-1)}$. Also note that $\nu^* \in \Term{\hat{\instance}_{\varphi_i(k) }}$. We extend $f_j$ by mapping the null $\nu$ of $\instance^{k+1}$ introduced by the variable $z_\nu$ to the null $\nu'$ is null introduced by $z_\nu$ by the application $(R_j,f_0\circ\pi_j)$. 
 \end{itemize}

Let us now consider $\instance_{\varphi_i(n)}$. $(R_n,f_j\circ\pi_n)$ is a trigger on $\instance_{\varphi_i(n)}$ for any $j \leq i$. Moreover, as $\nu^* \in \pi_n(\Fr{R_n})$ and some term of $\Fr{R_n}$ is mapped to a term of $\instance^{k-1}$, all these triggers correspond to different rule applications, which each adds at least one atom where $\nu$ occurs. This concludes the proof.
%that does not belong to $\pi_k(\Fr{R_k})$ (otherwise, $(R_n,\pi_n)$ would fulfill the pieceful condition). Hence $(R_n,f_j\circ\pi_n)$ for $j \in \{0,i\}$ are $i+1$ distinct rule applications, each introducing at least a new atom having $\nu^*$ as argument (as $R_n$ has only one piece in its head), concluding the proof.

\paragraph{Proof of Proposition \ref{prop-sgbts-chase-k-pieces}}

We prove by induction that, for any  $\instance$ and $\ruleset$, $P(k)$ is upper bounded by a function of $k$ and parameters depending only on $\ruleset$. 
For $k = 0$, $P(k) = 1$ (since each atom forms its own piece). 
Assume the property is true for $\chase{i}{\instance}{\ruleset}$ with $i \geq 0$. Each rule application creates a new piece (of size $|H|$) or adds $|H|$ atoms to an existing piece (or generates an atom already present). 
Each rule $R$ maps its frontier to a piece $P$ and there are at most $|\Term{P} |^{|\Fr{R}|}$ different ways of mapping this frontier. Moreover, denoting by $a$ the maximal arity of a predicate, $|\Term{P}| \leq |P| \times a$. 
Hence, $P({i+1})$ is less than $(P(i)\times {a})^{fr} \times h \times |\ruleset|$, where $fr$ and $h$ are the maximal size of a rule frontier in $\ruleset$ and a rule head in $\ruleset$, respectively. We conclude by noting that, by induction hypothesis, $P(i)$ is upper bounded by a function of $k$ and parameters depending only on $\ruleset$.

\paragraph{Proof of Proposition \ref{prop-pieceful-bounded-imply-parallelisable}}
 As $\ruleset$ is pieceful and bounded, there is a finite set of pieces $\mathcal{P}$ such that for any instance $\instance$, any piece of $\chase{\instance}{\ruleset}	$ is isomorphic to a piece of $\mathcal{P}$.  
 By isomorphism from a piece $P_1$ to a piece $P_2$, we mean a bijection $b$ from $\Term{P_1}$ to $\Term{P_2}$ such that, for  all $x \in \Term{P_1}$, $b(x)$ is a null iff $x$ is a null, and $b(P_1) = P_2$. 
 
 For any piece $P \in \mathcal{P}$, let $(\vect{c_P})$ be a tuple obtained by a total ordering on $\Const{P}$, and  consider the query $q_P(\vect{c_P}) = \exists \vect{y}\ P(\vect{c_P},\vect{y})$, 
 with $\vect{y}$ denoting the variables from $P$. Let $Q_P$ be a finite complete set $\ruleset$-rewritings of $q_P$ (note that we need here the general definition of piece-unifiers that deals with constants). There is such set for any $P$, as $\ruleset$ is bounded, hence \emph{fus}.
   We define $\ruleset'$ as the set of rules of the shape
 \[q'_P(\vect{x}) \rightarrow \exists\vect{y} P(\vect{x},\vect{y}),\]
 
where  $\vect{x}$ is a tuple of variables in bijection with $\vect{c_P}$, for $P\in\mathcal{P}$ and $q'_P\in Q_P$. 
 We claim that $\ruleset'$ is a parallelisation of $\ruleset$.
 
% \mlm{Je remettrais la preuve en entier ici pour le confort du lecteur. Ainsi qu'en préalable la définition générale des pieces unifiers - en incluant le traitement des variables réponses qui ne doivent pas être unifiées avec des existentielles}

It remains to prove that the provided $\ruleset'$ is indeed a parallelisation of $\ruleset$. 
\begin{itemize}
  \item $\ruleset'$ is finite, as \emph{(i)} $Q_P$ is finite for any $P$, \emph{(ii)} given $Q_P = Q_P'$ if $P$ is isomorphic to $P'$, by mapping $\vect{c_P}$ to $\vect{c_{P'}}$ and \emph{(iii)} $\mathcal{P}$ is finite;
  \item Let $P_{\instance,\ruleset}$ be a piece of $\chase{\infty}{\instance}{\ruleset}$ and let $P \in \mathcal{P}$ be isomorphic to $P_{\instance,\ruleset}$. As rules of $\ruleset$ do not contain any constant, all rewriting steps that can be performed on 
   $q_P(\vect{c_P})$ can also be performed on $P_{\instance,\ruleset}$ (up to the isomorphism). Hence, by completeness of query rewriting, and since $P_{\instance,\ruleset}$ belongs to $\chase{\infty}{\instance}{\ruleset}$, there is some rule of $\ruleset'$ that is applicable on $\instance$ and that generates $P_{\instance,\ruleset}$. Actually, we just described an injective mapping from the pieces of $\chase{\infty}{\instance}{\ruleset}$ to the set of atoms produced by all applications of the rules in $\ruleset'$ on $\instance$. As distinct rule applications introduce disjoint sets of nulls, this shows that $\chase{\infty}{\instance}{\ruleset}$ maps to $\chase{1}{\instance}{\ruleset}$ by an injective homomorphism. 
  \item Let us assume that a rule $R = q_P'(\vect{x}) \rightarrow \exists \vect{y} P(\vect{x},\vect{y})$ 
  of $\ruleset'$ is applied to $\instance$ through homomorphism $\pi$. By the soundness of $\ruleset'$ (which relies on the soundness of query rewriting), and because rules of $\ruleset$ do not contain any constant, there exists an $\ruleset$-derivation from $\instance$ leading to $\instance'$ to which $\exists\vect{y} P(\pi(\vect{x}),\vect{y})$ homomorphically maps. 
  Hence $\chase{1}{\instance}{\ruleset}$ homomorphically maps to $\chase{\infty}{\instance}{\ruleset}$. 
 \end{itemize}

\section{Proofs of Section 4}

\paragraph{Proof of Proposition \ref{prop-pieceful-derivation-star}}

%   Let $\ruleset$ be a pieceful set of rules. We show that for any $i$, there exists a finite set $\ruleset_i$ s.t. for any instance $\instance$, for any $\ruleset$-derivation of $\instance$ of length $i$ resulting in $\instance_i$, for any piece $P$ of $\instance_i$, there exists $R_P \in \ruleset_i$ and $\pi_P$ s.t. $P$ maps injectively to the result of the application of $R_P$ by $\pi_P$.
  We prove the result by induction on the length $i$ of the derivation. For $i = 1$, if $P \not \subseteq \instance$, then $P$ has been generated by an application of a rule from $R \in \ruleset \subseteq \ruleset^{\star}$. Let us now assume the result holds for  any piece $P$ of $\instance_{i-1}$, and let us show it also holds for any piece $P$ of $\instance_i$. Let $I_0,(R_1,\pi_1,I_1),\ldots,(R_i,\pi_i,I_i)$ be a derivation of length $i$. $\pi_i$ maps the body of $R_i$ in $k$ pieces of $I_{i-1}$, where $k$ is smaller than the number of variables in 
  $\body{R_i}$. Let $P$ be the piece created or completed by the application of $R_i$ by $\pi_i$. As the derivation is pieceful, $P$ is either a piece of $I_{i-1}$ to which 
  % the instantiation of the head of $R_i$ 
  $\pi_i^{\mathrm{safe}}(\head{R_i})$
  has been added, or a new piece. We build by induction on $k$ a rule $R_P$ that maps by $\pi_P$ to $\instance$ and such that $P$ injectively maps to the result of the application of $R_P$ by $\pi_P$.
  \begin{itemize}
   \item If $R_i$ maps its entire body to $\instance$, then we take $R_P = R_i$ and $\pi_P = \pi_i$.
   \item Otherwise, let us assume the result for any rule that maps its body to at most $j-1$ pieces of $\instance_{i-1}$, and let us show the result when $R_i$ maps its body to $j$ pieces of $\instance_{i-1}$. If there is $x \in\Fr{R_i}$ such that $\pi_i(x)$ is a null, then let $P_f$ be the piece of $\instance_{i-1}$ containing that null. Otherwise, let $P_f$ be any of the $j$ pieces reached by $\pi_i$.
     %\emph{(i)} The frontier of $R_i$ is not mapped in $\Term{\instance}$. As the derivation is pieceful, there exists a piece $P_f$ of $I_{i-1}$ in which the frontier maps. 
     By the hypothesis of induction, let $R,\pi$ be a rule applicable to $\instance$ which creates a piece to which $P_f$ injectively maps by some homomorphism $\psi$. Let us denote by $s_\pi$ a function from $\Term{\head{R}}$ to itself, which is idempotent, the identity on non-frontier terms, and such that $s_\pi(x) = s_\pi(y) \in \Fr{R}$ if and only if $\pi(x) = \pi(y)$ for $x,y \in \Fr{R}$. Let us consider $R_i\circ_\mu R$, where $\mu$ is defined as: $(\pi_i^{-1}(P_f),\pi^{\mathrm{safe}^{-1}}\circ\psi\circ\pi_i(\pi_i^{-1}(P_f)),u)$, where $u$ is such that $u(x) = s_\pi(\pi^{\mathrm{safe}^{-1}}(\psi(\pi_i(x))))$ if $x \in \Term{\pi_i^{-1}(P_f)}$ and $u(x) = s_\pi(x)$ if $x \in \Term{\head{R}}$.
     %\mb{The definition of $\pi^s$ is missing} 
     This is indeed a piece-unifier, as:
   \begin{itemize}
    \item for all $x \in \Fr{R}$, $u(x) \in \Fr{R}$; indeed, $\pi(x) \in \Term{\instance}$, which is mapped by $\pi^{-1}\circ\psi$ back to the frontier of $R$;
    \item if $x$ is a separating variable of $\pi_i^{-1}(P_f)$, it means that $\pi_i(x)$ is a term of $\instance$, which is mapped to $\Fr{R}$;
    \item $u(\pi_i^{-1}(P_f)) = u(\pi^{\mathrm{safe}^{-1}}\circ\psi\circ\pi_i(\pi_i^{-1}(P_f)))$.
   \end{itemize}
   Let $\pi_{\pi_i,\pi}$ defined as follows:
   \begin{itemize}
   \item $\pi_{\pi_i,\pi}(x) = \pi_i(x)$ if $x \in \Term{\body{R_i}} \setminus \Term{\pi_i^{-1}(P_f)}$
   \item $\pi_{\pi_i,\pi}(x) = \pi(x)$ if $x \in \Term{\body{R}}$
   \end{itemize}
% % <<<<<<< HEAD
%    $\pi_{\pi_i,\pi}$ is a homomorphism from $R_i\circ_\mu R$ to $\instance_i$ which sends the atoms of $\body{R}$ to $j-1$ pieces of $\instance_{i-1}$ (all the pieces reached by $\pi_i$ except for $P_f$) or to original atoms. The result of the application is a piece into which the piece of $\instance_i$ containing $P_f$ maps. Indeed, one extends $\psi$ by mapping each term introduced in $\instance_i$ by an existential variable $z$ of $R_i$ to the term introduced by existential variable $z$ in $R_i\circ_\mu R$. As such, $R_i\circ_\mu R,\pi_{\pi_i,\pi}$ fulfills the induction assumption.
% % =======
   $\pi_{\pi_i,\pi}$ is a homomorphism from $R_i\circ_\mu R$ to $\instance_{i-1}$, which sends the atoms of $\body{R_i}$ to $j-1$ pieces of $\instance_{i-1}$ (all the piece reached by $\pi_i$ except for $P_f$) or to original atoms. The result of the application is a piece into which the piece of $\instance_i$ containing $P_f$ homomorphically maps. Indeed, one extends $\psi$ by mapping each term introduced in $\instance_i$ by an existential variable $z$ of $R_i$ to the term introduced by existential variable $z$ in $R_i\circ_\mu R$. As such, $R_i\circ_\mu R,\pi_{\pi_i,\pi}$ fulfills the induction assumption, which concludes the proof.
% >>>>>>> fba7ea921263e18dcc5cc01c9ae4bc912946c137
   %\emph{(ii)} The frontier of $R_i$ is mapped in $\Term{\instance}$. The instantiation of the head $R_i$ is thus a piece in itself. Let $P$ be an arbitrary piece to which a subset of $\body{R_i}$ is mapped. Let $R \in \ruleset_{i-1}$ and $\pi$ be a homomorphism from $\body{R}$ to $\instance$ such that $P$ maps injectively to the instantiation of $\head{R}$ applied by $\pi$. We define $R_i\circ_\mu R$ in exactly the same way as the previous case (the case distinction is made to ensure that the head of the created rule generates an entire piece of $\instance_i$, but is not relevant to the definition of the composed rule).
  \end{itemize}

\paragraph{Proof of Proposition \ref{prop-pieceful-global-stability}}

We show the contraposite. Let us assume that $\ruleset'= \ruleset \cup \{R_2 \circ_\mu R_1\}$ is not pieceful. Let $D' = \instance'_0,\ldots,\instance'_n$ be a pieceful $\ruleset'$-derivation on which $(R_{n+1},\pi_{n+1})$ is applicable in a non-pieceful way. 
%  \mlm{Check consistency with def derivation: $I_1$ or $I_0$, rule applied on $I_n$ : $R_n$ or $R_{n+1}$ ? }
 We build a non-pieceful $\ruleset$-derivation $D = \instance_1,\ldots,\instance_{\varphi(n)}$ by induction on $n$, which satisfies the following:
\begin{itemize}
  \item For all $1 \leq i \leq n$,  there is an isomorphism $\psi_i$ from $\instance'_i$ to $\instance_{\varphi(i)}$
%   there is an isomorphism $\psi_n$ from $\instance'_n$ to $\instance_{\varphi(n)}$;
%   \mlm{Plutôt: ? Ou c'est sous-entendu par le "by induction on $n$"?}
\item For any set $A_j$ (produced by a rule application in $D$), there exists $A'_k$ (produced by a rule application in $D'$) such that ${A_j} \subseteq \psi_n({A'_k})$.
\end{itemize}
  %\mlm{I am wondering: so, not directly ?}
 %The first condition implies that $(R_n,\psi\circ\pi_n)$ is a valid rule application on $\instance_{\varphi(n)}$, while the second implies that it cannot satisfy the piece condition if $(R_n,\pi_n)$ does not. 
 %\mlm{$(R_n,\psi\circ\pi_n)$ is a valid rule application on $\instance_{\varphi(n)}$: but $R_n$ is not necessarily in $\ruleset$.}
%    \mt{The construction assumes that $\Term{\body{R_i}}$ are not disjoint with $\Term{\body{R_1}}$ and  $\Term{\body{R_2}}$, which goes against assumption in the preliminary. Seems to be the only place in the paper where this assumption is violated. }

Let us present the construction:
 \begin{itemize}
  \item $\instance_{\varphi(1)} = \instance'_1$, $\psi_1$ being the identity; 
  \item if $(R_i,\pi_i)$ is applied to $\instance'_{i-1}$, and $R_i \not = R_2 \circ_\mu R_1$, then $\instance_{\varphi(i)} = \alpha(\instance_{\varphi(i-1)}, R_i, \psi\circ\pi_i)$.  If that derivation is not pieceful, then $\ruleset$ is not pieceful, and we stop the construction here; otherwise, we extend $\psi_{i-1}$ in the natural way, that is by mapping the null introduced in $\instance'_i$ by the instantiation of an existential variable $z$, to the null introduced in $\instance_{\varphi(i)}$ by $z$; notice that $A_i = \psi_i(A'_i)$;
  \item if $(R_i,\pi_i)$ is applied to $\instance'_{i-1}$, and $R_i = R_2 \circ_\mu R_1$, then consider $\hat{\instance}_{\varphi(i)} = \alpha(\instance_{\varphi(i-1)}, R_1, \psi\circ{\pi_i}_{\mid \Term{\body{R_1}}})$ and $\instance_{\varphi(i)} = \alpha(\hat{\instance}_{\varphi(i)}, R_2,\pi')$ 
%   $\hat{\instance}_{\varphi(i)}$ be the result of applying $R_1$ through $\psi\circ{\pi_i}_{\mid \Term{\body{R_1}}}$ to $\instance_{\varphi(i)}$ \mlm{i-1?}, and $\instance_{\varphi(i)}$ be the result of applying $R_2$ through $\pi'$ 
%   \mlm{to $\hat{\instance}_{\varphi(i)}$?} \\
%   \mlm{\\
%   $\hat{\instance}_{\varphi(i)} = (\instance_{\varphi(i-1)}, R_1, \psi\circ{\pi_i}_{\mid \Term{\body{R_1}}})$\\
%   $\instance_{\varphi(i)} = (\hat{\instance}_{\varphi(i)}, R_2,\pi')$
% \\
  
  defined as:
  \begin{itemize}
   \item if $x \in \Term{\body{R_2}} \cap \Term{\body{R_2 \circ_\mu R_1}}$, then $\pi'(x) = \psi\circ{\pi_i}(x)$;
   \item otherwise, $x$ has been unified by $\mu$ with a term appearing in the head of $R_1$: if $u(x) \in \Fr{R_1}$, we define $\pi'(x) = \pi_i(u(x))$, otherwise we define $\pi'(x)$ as the null introduced by the instantiation of $u(x)$ during the application of $R_1$ through $\pi_i$;
  \end{itemize}
  If one of these two rule applications is not pieceful, we have exhibited a non-pieceful $\ruleset$-derivation, and we stop the construction here. Otherwise, we again extend $\psi_{i-1}$ in the natural way, and we notice that $A_{\varphi(i-1)+1} \subseteq \psi_{i}(A'_i)$ 
  and that $A_{\varphi(i)} \subseteq \psi_{i}(A'_i)$.
 \end{itemize}
 Let us now consider $(R_{n+1},\pi_{n+1})$. $\psi_n\circ\pi_{n+1}$ is a homomorphism from $\body{R_{n+1}}$ to $I_{\varphi(n)}$. If $R_{n+1} \in \ruleset$, by hypothesis on $(R_{n+1},\pi_{n+1})$, there is no $A'_k$ such that $\pi_{n+1}(\Fr{R_{n+1}})) \subseteq \Term{A'_k}$. By induction hypothesis, for every ${A_j}$ in $D$, there is $A'_{k'}$ such that ${A_j} \subseteq \psi_n({A'_{k'}})$. Hence, there is no ${A_j}$ in $D$ such that $\psi_n\circ\pi_{n+1}(\Fr{R_{n+1}}) \subseteq \Term{A_j}$, and $\ruleset$ is not pieceful.
 
 If $R_{n+1} = R_2 \circ_\mu R_1$, the same reasoning apply, noting that since $\mu$ satisfies the stability property, either $\Fr{R_{n+1}} \subseteq u(\Fr{R_2})$ or $\Fr{R_{n+1}} \subseteq u(\Fr{R_1})$.

\subsection*{Proof of Proposition \ref{prop-sgbts-local-stability}}
Let $\ruleset$ be not pieceful. 
%Let $\instance = \instance_0,\instance_1,\ldots,\instance_n$ be an $\ruleset$-derivation and $I_n$ be the first instance on which an application $(R: B \rightarrow H,\pi)$ violates the pieceful constraint. 
Let $D = (\instance_0 = \instance),\instance_1,\ldots,\instance_n$ be a pieceful $\ruleset$-derivation and let $(R: B \rightarrow H,\pi)$ be a trigger for $\instance_n$ that violates the pieceful constraint. 
Let $j$ be the largest integer for which $\instance_j$ introduces a null to which a frontier variable of $R$ is mapped by $\pi$, and let $\nu$  be any such null. Let us consider $\instance_{j-1}$. $\instance_n$ is the result of a pieceful derivation starting from $\instance_{j-1}$. Let us consider the piece $P_\nu$ of $\instance_n$ w.r.t. $\Term{\instance_n} \setminus \Term{\instance_{j-1}}$ containing $\nu$. By Proposition \ref{prop-compact-existential}, there exists a rule $R_{P_\nu} \in \ruleset^\star$ that is applicable through $\pi_P$ on $I_{j-1}$ and generates $P'$ such that $P$ maps injectively to $P'$ through $\psi$. % Let us consider $R_{P_\nu}$ fulfilling these conditions for the piece $P_\nu$ that contains $\nu$.
Let us consider, similarly as in the proof of Proposition \ref{prop-pieceful-derivation-star}, $\mu = (B_\mu = \pi^{-1}(P_\nu),H_\mu = \pi_P^{\mathrm{safe}^{-1}}\circ\psi\circ\pi(\pi^{-1}(P_\nu)),u)$, 
where $u(x)$ is defined as $s_{\pi_P}(\pi_P^{\mathrm{safe}^{-1}}(\psi(\pi(x))))$ if $x\in \Term{B_\mu}$ and as $s_{\pi_P}(x)$ if $x \in \Term{H_\mu}$.
$\mu$ is then a piece unifier of $\body{R}$ with $R_P$. Indeed:
\begin{itemize}
 \item $B_\mu \subseteq \body{R}$ and $B_\mu \not = \emptyset$, as $\nu$ belongs to $P_\nu$;
%  \item $h\circ\pi(\pi^{-1}(P_\nu)) \subseteq \head{R_{P_\nu}}$
%  \mlm{I don't see where the piece condition is. }
 \item a separating variable of $B_\mu$ is sent to a null that belong both to $\instance_{j-1}$ and to $\instance_n$. As such, it is sent by $\pi_P^{\mathrm{safe}^{-1}}\circ\psi$ to a frontier term of $R$; 
 \item $u(B_\mu) = u(H_\mu)$.
\end{itemize}

$\mu$ does not satisfy the stability property, as by definition of $\pi$, there is $x \in \Fr{R}$ such that $x \not \in \Var{B_\mu}$, 	while 
$\pi^{-1}(\nu) \in \Fr{R}$ is unified by $\mu$. Hence $u(\Fr{R}) \not = \emptyset$ and $\Fr{R} \not \subseteq \Var{B_\mu}$, so $\mu$ does not satisfy the stability property. 

\end{document}